\documentclass[conference,compsoc]{IEEEtran}
\ifCLASSOPTIONcompsoc
  \usepackage[nocompress]{cite}
\else
  \usepackage{cite}
\fi
\usepackage{amsthm}
\usepackage{amsmath}
\usepackage{amsfonts}
\usepackage{xspace}
\usepackage{graphicx}
\usepackage[most]{tcolorbox}
\usepackage{enumitem}
\usepackage{multirow}
\usepackage{url}
\usepackage{subfig}
\usepackage{booktabs}
\usepackage{adjustbox}

\newcommand{\mybpara}[1]{\vspace{3pt}\noindent{\textbf{#1}}\xspace}
\newcommand{\ring}{\ensuremath{\textsc{Ring}}\xspace}

\newtheorem{theorem}{Theorem}
\newtheorem{definition}{Definition}

\usepackage{lettrine}

\usepackage{xcolor}
\usepackage[table]{xcolor}

\ifCLASSINFOpdf
\else
\fi
\begin{document}
%
\title{Your Privacy My Cloak: Backdoor Attacks \\on Differentially Private Federated Learning}

 \author{\IEEEauthorblockN{Xiaolin Li}
 \IEEEauthorblockA{Purdue University\\
 West Lafayette, IN, USA\\
 li4955@purdue.edu}
 \and
 \IEEEauthorblockN{Ning Wang}
 \IEEEauthorblockA{University of South Florida\\
 Tampa, FL, USA\\
 ningw@usf.edu}
 \and
 \IEEEauthorblockN{Ninghui Li}
 \IEEEauthorblockA{Purdue University\\
 West Lafayette, IN, USA\\
 ninghui@purdue.edu}
\and
\IEEEauthorblockN{Wenhai Sun}
\IEEEauthorblockA{Purdue University\\
 West Lafayette, IN, USA\\
 whsun@purdue.edu}}


%


\maketitle

\begin{abstract}
Prior research suggests that differential privacy (DP) inherently enhances the robustness of federated learning (FL) against backdoor attacks. In this paper, we challenge this assumption. Through an empirical analysis of two baseline attack strategies, we uncover a fundamental tension in DP-FL: while bypassing DP allows state-of-the-art defenses to detect and filter malicious updates, complying with DP inadvertently masks their distinguishing statistical characteristics. Consequently, existing defenses become ineffective as DP reduces the raw backdoor signal. Building on this masking effect, we propose \ring, a novel attack that explicitly exploits DP to conceal malicious contributions while maximizing attack impact. By collaboratively crafting adversarial perturbations, compromised clients reconstruct a strong backdoor signal during aggregation without triggering anomaly detection. \ring operates as a perturbation layer that is agnostic to the underlying backdoor technique, making it broadly applicable and composable with existing attacks -- a property that significantly amplifies the threat it poses to DP-FL. Extensive evaluations across four image and text datasets under non-iid distributions show that \ring achieves an average attack success rate of $90.3\%$ against six state-of-the-art defenses under a moderate privacy budget, an improvement of up to $26.08\times$ over baseline strategies. Finally, we evaluate potential countermeasures and find that mitigating this threat incurs significant utility trade-offs, exposing a fundamental security gap in the deployment of differentially private FL.
\end{abstract}


%
\IEEEpeerreviewmaketitle

\section{Introduction}
Federated learning (FL)~\cite{mcmahan2017communication} is a distributed machine learning paradigm that enables collaborative model training while preserving data locality, with wide applications \cite{hard2018federated,yang2018applied,ramaswamy2019federated,brisimi2018federated,ogier2022flamby}. Yet FL remains vulnerable to backdoor attacks~\cite{wang2020attack,bagdasaryan2020backdoor}, in which compromised clients inject poisoned updates to cause targeted misclassifications at inference time. Applying differential privacy (DP)~\cite{dwork2014algorithmic} to FL has been widely advocated as a remedy: prior work demonstrates that DP noise provides both empirical and certified robustness against such threats~\cite{Naseri2020LocalAC,sun2019can,xie2023unraveling}.

\begin{figure}[t]
\centering

\includegraphics[scale=0.45]{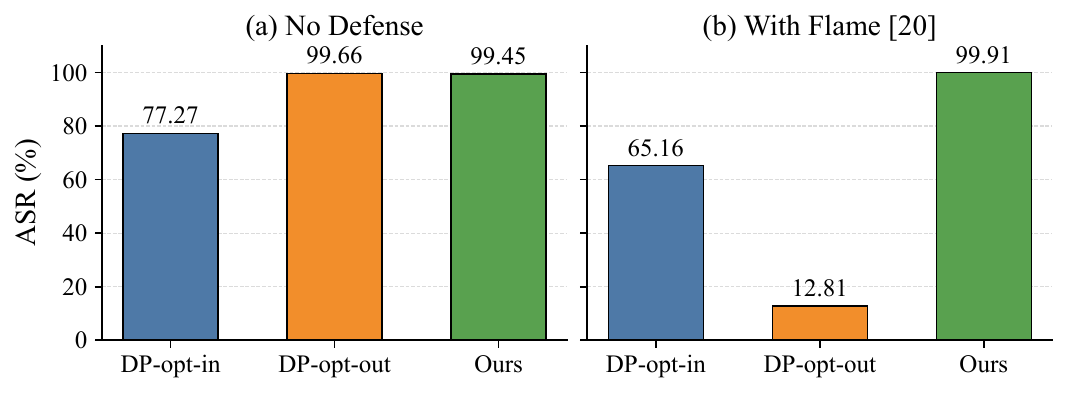}

\caption{ASR under sample-level DP-SGD ($\epsilon=1$) and prob non-iid setting on MNIST.}
 \label{fig_RING_asr_flame}

\end{figure}

In this paper, we challenge this assumption. We show that, through a carefully designed perturbation function, backdoor attacks can be \emph{strengthened} rather than suppressed under differentially private FL. Specifically, our attack recovers the attack success rate (ASR) comparable to an undefended setting, while simultaneously evading state-of-the-art in-training defenses. This finding connects to a broader body of work showing that DP mechanisms can be exploited to inadvertently benefit adversaries~\cite{li2023fine,cao2021data,li2024robustness,Li2025MDPA,giraldo2020adversarial}. Our results are particularly alarming given the growing deployment of DP in privacy-preserving AI systems. We demonstrate that attackers can target more advanced and consequential machine learning tasks than previously shown, raising serious questions about the sufficiency of DP as a 
backdoor defense.



\mybpara{Our Attack.} To characterize the threat landscape and motivate our research, we empirically study two intuitive baseline attack strategies, i.e., the \emph{DP-opt-in attack} and the \emph{DP-opt-out attack}, against differentially private FL protected by six state-of-the-art backdoor defenses \cite{rieger2022deepsight,blanchard2017machine,nguyen2022flame,krauss2023mesas,fereidooni2023freqfed,kabir2024flshield}. Under the DP-opt-in strategy, malicious clients perturb their model updates in compliance with DP protocols (e.g., DP-SGD~\cite{abadi2016deep}). This baseline reflects the scenario where DP operates as intended, and thus represents an upper bound on the system's robustness against backdoor attacks. In contrast, a DP-opt-out adversary bypasses DP entirely, submitting unperturbed updates directly to the aggregation server, a more pragmatic strategy that trades privacy-protocol compliance for maximized attack gain.

These experiments yield two key observations. First, \textit{random DP noise can inadvertently aid the attacker in evading detection.} Even though the DP-opt-in attack significantly reduces backdoor efficacy, the injected noise simultaneously masks the statistical fingerprints of poisoned updates, enabling them to evade existing defenses. Second, \textit{ASR recovers when the attacker opts out of DP, at the cost of increased detection risk.} Without DP perturbation, the statistical divergence between malicious and benign updates becomes pronounced, making anomaly-based detection substantially more effective. Taken together, the two baseline attacks reside opposite ends of the threat spectrum -- trading off \textit{stealthiness} against \textit{attack effectiveness} -- a tension we investigate in detail in Section~\ref{sec:threat_investigation}.

In this paper, we investigate the following research question: \emph{Can an attacker simultaneously achieve high attack success rate and evade existing backdoor defenses in differentially private FL?} Equivalently, from the defender's perspective: \textit{Does combining DP with existing mitigation methods provide sufficient protection against backdoor attacks?} To address this, our attacker jointly optimizes two objectives: \textit{effectiveness}, i.e., restoring a strong backdoor signal in the global model, and \textit{stealthiness}, i.e., making malicious updates indistinguishable from DP-perturbed benign ones under existing defenses. Specifically, malicious clients collaboratively craft adversarial noise such that the noise masks poisoned updates locally, yet cancels out during aggregation, allowing the backdoor signal to recover in the global model while individual malicious updates remain undetected. 

To realize this, we are inspired by secret sharing~\cite{delfs2002introduction}, treating the poisoned model updates as the secret to be concealed. We name our attack \textbf{\ring}, drawing an analogy to a covert spy ring that operates under cover to infiltrate and manipulate a target. \ring is designed as an adversarial perturbation layer that is decoupled from any specific backdoor attack strategy, making it broadly applicable. It can augment existing backdoor attacks to remain effective in differentially private FL settings.

We support our claims through both theoretical analysis and extensive empirical evaluation. Theoretically, we analyze how key attack parameters shape the effectiveness-stealthiness trade-off of \ring. Our experiment shows that \ring maintains high ASR under the existing defense, achieving performance comparable to the DP-opt-out strategy in an undefended setting, as shown in Figure~\ref{fig_RING_asr_flame}. Further evaluation across three benchmark image datasets and one text dataset under three non-iid data partitions finds that \ring achieves a consistently high average ASR of $90.3\%$ against six state-of-the-art defenses, up to a $26.08\times$ improvement over the baseline attacks. 

We further conduct an ablation study to isolate the impact of attacker-controlled factors (e.g., the number of malicious clients), defender-controlled factors (e.g., the DP privacy budget and clipping bound), data heterogeneity (iid vs. non-iid settings) and attack generality. Across all configurations, \ring maintains high and stable attack performance, demonstrating resilience to a wide range of practical deployment conditions.

Finally, we explore potential countermeasures against \ring. While several candidate defenses show initial promise, each incurs a non-negligible utility or privacy cost, and none offers a practical, effective solution. These limitations highlight the need for principled defenses that can simultaneously preserve model utility, enforce privacy guarantees, and resist adaptive adversaries of the kind introduced here.



\mybpara{Contributions.} We make the following contributions.
\begin{itemize}
\item We present the first systematic study of backdoor threats in differentially private federated learning, revealing that DP noise affects not only attack effectiveness but also attack stealthiness against existing defenses, an observation that directly motivates our attack design. 

\item We propose \ring, the first backdoor attack that remains effective against state-of-the-art defenses in DP-protected FL. Each malicious update is crafted to statistically resemble a DP-perturbed update, reducing its detectability under existing defenses, while their aggregation reconstructs the backdoor signal to maximize attack success.



\item We evaluate \ring across four benchmark datasets spanning image and text tasks under three non-iid data distributions, assessing it against six state-of-the-art backdoor defenses that represent diverse mitigation strategies. \ring consistently evades all defenses and achieves substantially higher ASR than baseline attacks.


\item We systematically evaluate candidate defenses against \ring and find that, while some show partial effectiveness, all incur significant utility or privacy costs and fail to generalize across settings. These results expose fundamental gaps in current defenses and raise important questions about the security of DP-protected AI/ML systems.

\end{itemize}

\section{Background and Related Works}
\subsection{Differential Privacy}
DP provides rigorous statistical guarantees of information leakage from the output of a randomized algorithm.  Formally, 
\begin{definition}[$(\epsilon,\delta)$-Differential Privacy~\cite{dwork2014algorithmic}]
A randomized mechanism $M$ satisfies $(\epsilon, \delta)$-differential privacy if, for any two neighboring datasets $D_1$ and $D_2$ that differ in at most one record, and for all sets of possible outputs $A \subseteq \text{Range}(M)$, 
\begin{align}
P[M(D_1)\in A]\leq e^\epsilon P[M(D_2)\in A]+\delta \nonumber
\end{align}
where $\epsilon$ and $\delta$ are privacy parameters; smaller values of both indicate a stronger privacy guarantee.
\end{definition}

In the machine learning setting, a DP training mechanism bounds how much the released model parameters can reveal about any individual training record.

\subsection{Federated Learning}\label{sec:Fed_Learn}
FL enables collaborative model training across distributed participants without centralizing raw data~\cite{mcmahan2017communication}. We adopt a standard FL setup following prior work~\cite{bagdasaryan2020backdoor,xie2019dba,fereidooni2023freqfed,kabir2024flshield}: each client holds a private local dataset over a shared feature space, and a central server coordinates training by aggregating client model updates into a global model. Raw data never leaves client devices; only model parameters are communicated to the server.

Suppose there are $n$ clients in total. At each communication round $t$, the server selects a subset $\mathcal{U}_t$ of $r \cdot n$ clients, where $r$ is the participation fraction. Each selected client $i \in \mathcal{U}_t$ initializes from the current global model $G_t$ and computes a local gradient over its private dataset $\mathcal{D}_i$ as $g_{i,t} = \nabla \mathcal{L}_{\text{task}}(G_t, \mathcal{D}_i)$. 
Each client then submits $g_{i,t}$ to the server, which aggregates the received gradients to produce the updated global model for round $t+1$:
\begin{equation}\label{eq:weighted_fedavg}
G_{t+1} = G_t - \eta \cdot \sum_{i \in \mathcal{U}_t} w_i g_{i,t}
\end{equation}
where $\eta$ is the global learning rate and $w_i$ is the aggregation weight assigned to client $i$ by the server. When a robust aggregation mechanism is employed (Section~\ref{sec:backdoor_defense}), $w_i$ is determined by that defense; otherwise, $w_i$ is set proportional to the local dataset size $|\mathcal{D}_i|$.

\subsection{Differentially Private Federated Learning}\label{sec:SL-DP}
Sharing intermediate model updates in FL can expose sensitive user information through gradient inversion and related inference attacks~\cite{zhu2019deep}. Differentially private federated learning (DP-FL)~\cite{geyer2017differentially,Naseri2020LocalAC} addresses this by incorporating formal DP guarantees into the training process. Under our assumption of a semi-honest server, which observes aggregated updates but does not manipulate the protocol, the \emph{sample-level DP} (SL-DP) framework is well-suited to our setting~\cite{gu2025dp}.

The standard mechanism for enforcing SL-DP is \emph{differentially private stochastic gradient descent} (DP-SGD)~\cite{abadi2016deep}. In each communication round $t$, client $i$ samples a mini-batch $\mathcal{B}_t$ from its local dataset $\mathcal{D}_i$ with sampling probability $\tfrac{b}{|\mathcal{D}_i|}$, where $b$ is the batch size. The per-sample gradient $g_t(x)$ for each $x \in \mathcal{B}_t$ is clipped to a fixed $\ell_2$ norm bound $C$ as $\bar{g}_t(x) = g_t(x)/\max\left(1, \tfrac{\|g_t(x)\|_2}{C}\right)$. Clipping ensures that the $\ell_2$ sensitivity of the batch gradient $\sum{x \in \mathcal{B}_t} g_t(x)$ is bounded by $C$ for any two neighboring datasets. Gaussian noise scaled to $C$ is then added to enforce $(\epsilon, \delta)$-DP, producing the perturbed gradient:
\begin{equation}\label{eq:input_backdoor}
\hat{g}_{i,t} = \frac{1}{b} \left(\sum_{x \in \mathcal{B}_t} \bar{g}_t(x) + \mathcal{N}(0, \sigma^2 I_d)\right),
\end{equation}
where $\sigma := \sigma_m \cdot C$ is the effective noise standard deviation, with $\sigma_m$ denoting the noise multiplier.

\subsection{Backdoor Attacks and Mitigations in FL}  
\subsubsection{Common Backdoor Attacks in FL}\label{sec:backdoor_background}
Backdoor attacks in FL occur when malicious clients inject a hidden task into the global model. Following prior work~\cite{wang2020attack,xie2019dba,bagdasaryan2020backdoor}, we focus on \textit{targeted model poisoning attacks}, where the adversary's goal is to cause the global model to consistently predict a predefined target label on trigger-bearing inputs~\cite{huang2011adversarial}. We assume the adversary directly controls a subset of clients and can deviate arbitrarily from the FL protocol. Concretely, each malicious client trains a backdoored gradient $g_{adv,t} = \gamma \nabla \mathcal{L}_{task}(G_t, \mathcal{D}_{\mathrm{adv},i})$, where $\mathcal{D}_{\mathrm{adv},i}$ denotes the attacker-controlled local dataset of client $i$, and $\gamma$ scales the strength of the injected backdoor signal.

Existing backdoor attacks in FL vary along two axes: how the trigger is embedded in the data, and how the malicious update is injected into the global model. On the data side, common strategies include: \textit{visible-trigger backdoor}~\cite{gu2017badnets}, which stamps a visible pattern onto inputs and flips their labels to a target class; \textit{distributed backdoor attack (DBA)}~\cite{xie2019dba}, which partitions a global trigger into local sub-patterns across clients; \textit{edge-case backdoor}~\cite{wang2020attack}, which poisons rare samples unlikely to appear in standard training; and \textit{semantic backdoor}~\cite{bagdasaryan2020backdoor}, which targets samples sharing a semantic feature (e.g., cars of a specific color) without modifying inputs directly. On the injection side, model-replacement attacks~\cite{bagdasaryan2020backdoor} amplify malicious updates to overwrite the global model, while Neurotoxin~\cite{pmlr-v162-zhang22w} concentrates the backdoor signal on parameters least frequently updated by benign clients, improving persistence across rounds.

Our approach is agnostic to the specific backdoor technique employed, as we demonstrate theoretically in Section~\ref{sec:stealthy} and empirically across multiple backdoor baselines in Section~\ref{sec:results_ring}. For our main experiments, we adopt the visible-trigger backdoor for image datasets and edge-case backdoor for text tasks unless otherwise specified.  Though these attacks are straightforward and detectable under SOTA defenses, they remain effective under the cover of \ring.

\subsubsection{Defenses}\label{sec:backdoor_defense}
Many defense schemes have been proposed to mitigate backdoor attacks during the training phase in FL. We organize them into three categories relevant to our work.

\vspace{3pt}
\textbf{Robust Aggregation.}
These methods identify and suppress malicious updates at the aggregation step. Multi-Krum~\cite{blanchard2017machine} (hereafter Krum) computes pairwise Euclidean distances among local updates and excludes those with the largest cumulative distances as likely malicious. MESAS~\cite{krauss2023mesas} applies six complementary statistical metrics, including cosine and Euclidean distance, to cluster updates and isolate malicious contributions. Flame~\cite{nguyen2022flame} clusters updates by pairwise cosine similarity, removes outliers, and further clips and perturbs remaining updates to limit residual adversarial influence. FreqFed~\cite{fereidooni2023freqfed} transforms model updates into the frequency domain, extracts core frequency components, and clusters them to identify poisoned updates.

\vspace{3pt}
\textbf{Representation-Level Defenses.}
Rather than inspecting raw parameter values, these approaches assess model behavior using auxiliary data or peer validation. DeepSight~\cite{rieger2022deepsight} computes behavioral metrics, such as normalized update energies and prediction divergences, against an auxiliary dataset to distinguish local models from the global model. FLShield~\cite{kabir2024flshield} and CrowdGuard~\cite{rieger2022crowdguard} leverage clients as validators. Each client evaluates candidate models on its local data and votes on whether a model appears benign or malicious to enable collaborative detection at the server.

\vspace{3pt}
\textbf{DP-enabled Robustness.}
Naseri~\textit{et al.}~\cite{Naseri2020LocalAC} empirically show that DP improves robustness against backdoor attacks while preserving acceptable utility. Sun~\textit{et al.}~\cite{sun2019can} introduce weak DP as a lightweight empirical defense that suppresses backdoor signals without significantly degrading model performance. Going beyond empirical evaluation, Xie~\textit{et al.}~\cite{xie2023unraveling} provide a theoretical analysis of DP-FL and establish certified robustness guarantees against backdoor attacks.

\vspace{3pt}
\textbf{Identified Gap.}
Prior work treats DP and dedicated backdoor defenses largely in isolation. This leaves two critical questions unanswered: Does DP noise inadvertently help attackers evade detection-based defenses by masking the statistical signature of poisoned updates? And conversely, can an adaptive attacker who bypasses DP still succeed in the presence of state-of-the-art mitigations? Understanding this interplay is essential for accurately characterizing the threat landscape in DP-FL, and it is precisely this gap that motivates the design of \ring.
\section{Threat Investigation}\label{sec:threat_investigation}

\subsection{Threat Model}\label{sec:threat_model}
We now define the threat model that applies to both the baseline investigation in this section and the \ring attack developed in Section~\ref{sec:RING}. We consider a DP-FL setting in which benign clients exploit DP-SGD to protect their local data from a semi-honest server, which performs weighted FedAvg aggregation as described in Section~\ref{sec:Fed_Learn}. A fraction of clients may be compromised and act as adversaries.

\vspace{3pt}
\textbf{Attacker's Goal.}
The attacker jointly optimizes for \textit{effectiveness} and \textit{stealthiness}. In particular, \ring targets an attack success rate comparable to that of the underlying backdoor attack in an undefended system, while simultaneously evading state-of-the-art in-training defenses deployed by the server.

\vspace{3pt}
\textbf{Attacker's Capability.}
We assume the adversary controls a small fraction $\beta$ of clients. Since \ring is a coordinated attack, analogous to DBA~\cite{xie2019dba}, it requires at least two malicious clients to jointly perform the adversarial perturbation. Each compromised client has full access to its local training data and complete control over the local training process, including hyperparameters such as the learning rate and number of local epochs. The attacker may also arbitrarily modify local model weights before submission to the server. This capability assumption is standard in the backdoor FL literature~\cite{rieger2022deepsight,rieger2022crowdguard,fereidooni2023freqfed}. Importantly, the attacker has no knowledge of whether a defense is deployed or how it operates, making the attack fully black-box with respect to the server-side mitigation.

We evaluate two baseline attack strategies in differentially private FL. Under the \emph{DP-opt-in attack}, the adversary compromises participating clients but lacks sufficient OS-level privileges to manipulate the local DP perturbation function. Consequently, DP-SGD is applied to the malicious updates, mirroring the behavior of benign clients. This establishes an upper bound on the system's robustness against backdoor injection, as DP noise operates as intended and maximally constrains the backdoor signal. Conversely, the \emph{DP-opt-out attack} assumes the adversary exercises complete control over the compromised devices, matching the capability level of our \ring attack. Rather than crafting an adversarial perturbation, however, this baseline adopts a straightforward but aggressive strategy: it bypasses DP entirely and submits unperturbed malicious updates directly to the server, aiming to maximize attack success rate.

\vspace{3pt}
\textbf{Defender's Capability.}
The server acts as the defender and may deploy any existing in-training mitigations, including DP-SGD, robust aggregation, and representation-level defenses. We do not consider post-training defenses, as our focus is on backdoor injection during the training phase, the setting in which such attacks are most practically relevant.
\subsection{Baseline Investigation}\label{sec:investigation}
We begin by investigating the performance of the two baseline attacks. To our knowledge, no prior work has evaluated either strategy against existing backdoor defenses in a DP-FL setting. To address this gap, we assess both baselines against six state-of-the-art defenses: DeepSight~\cite{rieger2022deepsight}, Krum~\cite{blanchard2017machine}, Flame~\cite{nguyen2022flame}, FLShield~\cite{kabir2024flshield}, FreqFed~\cite{fereidooni2023freqfed}, and MESAS~\cite{krauss2023mesas}. Our investigation is organized around two objectives.

\begin{itemize}[leftmargin=*]
\item \emph{IO-1: Detectability of the DP-opt-out attack.} When the attacker bypasses DP, unperturbed malicious updates may exhibit detectable statistical anomalies. We examine whether existing defenses can reliably identify such updates and suppress the backdoor signal.

\item \emph{IO-2: Evasiveness of the DP-opt-in attack.} Although DP noise reduces backdoor efficacy, it may simultaneously obscure the statistical signature of poisoned updates. We investigate whether this masking effect allows the DP-opt-in attacker to evade SOTA defenses despite reduced attack strength.
\end{itemize}

\vspace{3pt}
\textbf{Setup.} We conduct this experiment on the MNIST dataset under an iid data distribution~\cite{wang2022flare,krauss2023mesas,ali2024adversarially,lyu2025two}. Evaluations on more complex datasets under non-iid settings are provided in the main experiment in Section \ref{sec:main_exp}. In each communication round, 30 clients are selected in order from a pool of 120 to participate in local training. Each selected client trains for five epochs over its full local dataset before submitting updates for aggregation. We use the cross-entropy loss and enforce sample-level DP-SGD with privacy budget $\epsilon = 1$, clipping bound $C = 10$, learning rate $\eta = 0.05$, and momentum $0.9$. A fraction $\beta = 0.2$ of participating clients are designated as malicious per round. The poisoned data rate (PDR), i.e., the fraction of each malicious client's local data that contains backdoor samples, is set to $0.5$. We implement a visible-trigger backdoor attack~\cite{gu2017badnets} in which each malicious client superimposes an $8 \times 8$ white square near the top-right corner of its poisoned input images.



\vspace{3pt}
\textbf{Metrics.} To evaluate attack performance, we adopt two standard metrics. \emph{Attack success rate} (ASR) measures the fraction of backdoor test samples that the global model misclassifies as the target label. \emph{Accuracy} (Acc) measures the global model's classification performance on a clean test set, capturing the utility cost of any defense.

ASR alone, however, is insufficient to characterize stealthiness. A low ASR may stem from a defense aggressively discarding all updates -- benign and malicious alike -- rather than selectively identifying poisoned ones. To distinguish targeted detection from indiscriminate filtering, we introduce a complementary metric: \emph{retention rate}, defined as the proportion of submitted updates that survive the defense's filtering step. When malicious updates achieve a retention rate comparable to or higher than benign updates, the attack has evaded detection, indicating high stealthiness. Conversely, a disproportionately low retention rate for malicious updates signals effective targeted detection. All results are averaged over five independent experimental runs.




\vspace{3pt}
\textbf{Defense Behavior under DP.} To isolate the effect of DP noise on defense behavior, we first examine clean accuracy under varying privacy budgets in the absence of any attack. Figure~\ref{fig_investigation_acc_e} shows that all defenses preserve their relative accuracy ordering compared to the no-DP baseline ($\epsilon = +\infty$), suggesting that DP noise does not fundamentally alter their filtering behavior. In particular, DeepSight and Krum maintain accuracy on par with the no-defense baseline across all privacy budgets. MESAS, FreqFed, FLShield, and Flame exhibit slight degradation. As we show in subsequent experiments, however, this cost is accompanied by stronger backdoor suppression in certain settings, a trade-off that may be acceptable when resilience is the primary objective. 

\begin{figure}[t]
\centering
\includegraphics[scale=0.46]{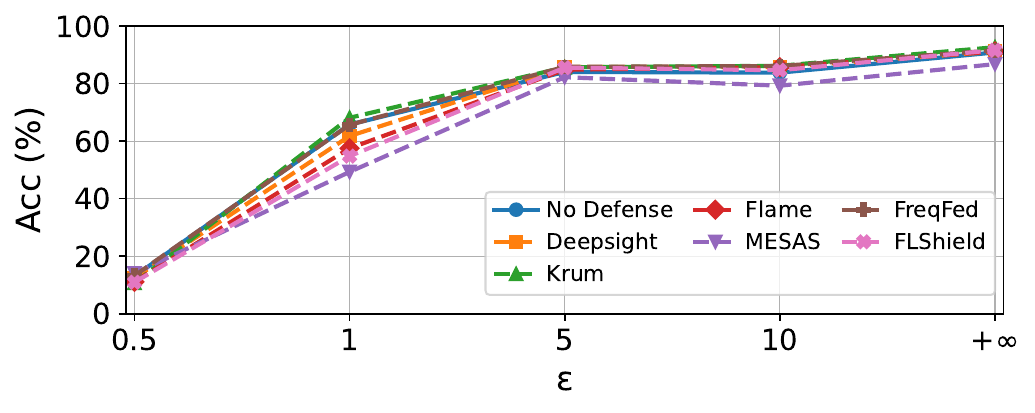}
 \caption{Acc under existing defenses without attack.}
 \label{fig_investigation_acc_e}
\end{figure}

\begin{figure*}[htbp]
\centering
\includegraphics[scale=0.35]{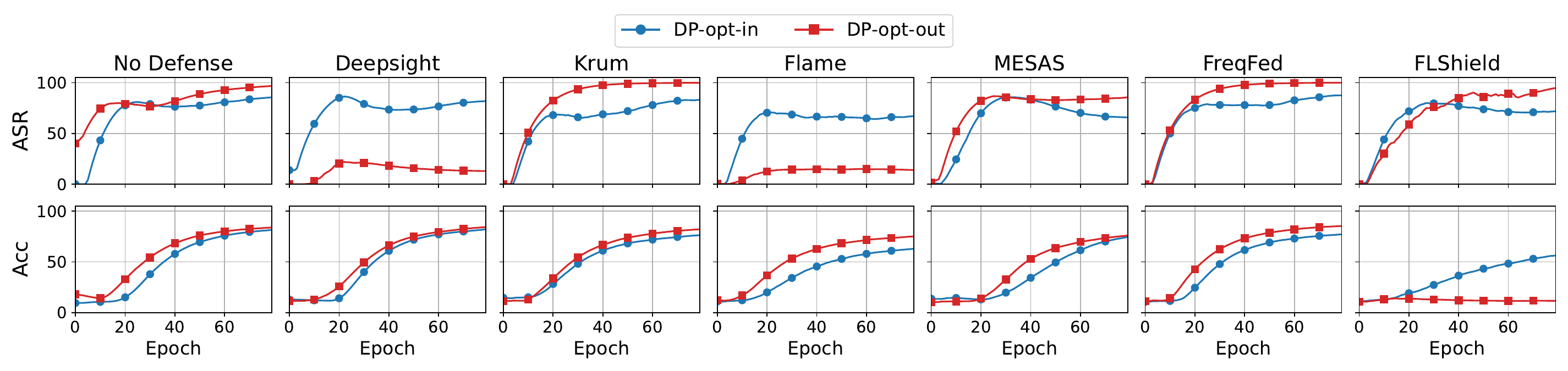}
\caption{Performance of existing defenses against DP-opt-in and DP-opt-out attacks.}
 \label{fig_input&output_Backdoor_Main}
\vspace{-2pt}
\end{figure*}

\subsubsection{IO-1: Detectability of the DP-opt-out Attack.}
Figure~\ref{fig_input&output_Backdoor_Main} reports ASR and Acc for each defense, and Figure~\ref{fig_retention_input&output} presents the corresponding retention rates. The DP-opt-out attack achieves an ASR close to $100\%$ in the absence of any defense, substantially outperforming the DP-opt-in attack. Among the deployed defenses, DeepSight, Flame, MESAS, and FLShield reduce ASR relative to Krum and FreqFed, though Flame, MESAS, and FLShield also incur a degradation in Acc.

The retention rates in Figure~\ref{fig_retention_input&output} clarify the underlying mechanisms. DeepSight and Flame selectively identify and remove malicious updates, producing low ASR with comparatively modest accuracy costs. FLShield, by contrast, filters both benign and malicious updates without discrimination, suppressing ASR at the expense of main-task accuracy. Krum and FreqFed exhibit the opposite failure mode, i.e., they preferentially discard benign updates while retaining malicious ones, which explains their persistently high ASR.


\begin{tcolorbox}[colback=gray!10, colframe=gray!40, boxrule=0pt, left=1pt, right=1pt, top=1pt, bottom=1pt, arc=0pt]
\mybpara{Takeaways.}  
In practice, a rational adversary will choose the strategy that maximizes attack gain and the DP-opt-out attack does exactly that, achieving near-perfect ASR in an undefended setting. However, our results show that bypassing DP comes at a cost: the absence of DP noise leaves malicious updates statistically distinguishable, allowing several SOTA defenses to detect and suppress them. Attack effectiveness and stealthiness thus remain in fundamental tension for the DP-opt-out strategy.
\end{tcolorbox}

\begin{figure}[htbp]
\centering
\includegraphics[scale=0.3]{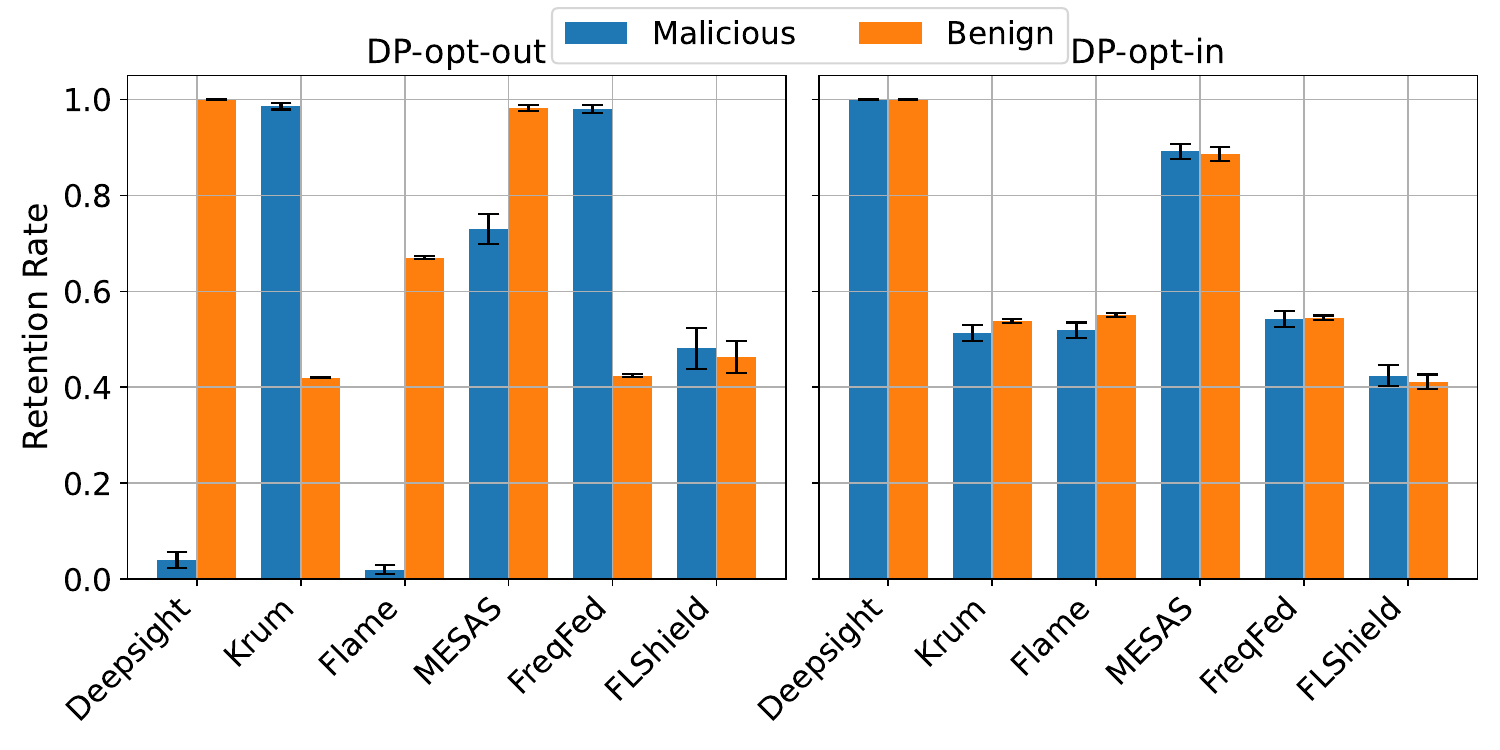}
 \caption{Retention rate of the defenses against DP-opt-in and DP-opt-out attacks with a 95\% confidence interval (CI).}
 \label{fig_retention_input&output}
\end{figure}

\subsubsection{IO-2: Evasiveness of the DP-opt-in Attack.} 
As shown in Figure~\ref{fig_input&output_Backdoor_Main}, the DP-opt-in attack yields substantially lower ASR than the DP-opt-out case in the absence of defenses, consistent with the expected suppression of the backdoor signal under DP noise. Notably, however, introducing state-of-the-art defenses produces little additional reduction in ASR relative to the no-defense baseline, which suggests that these mitigations offer limited marginal protection when DP is already applied.

The retention rates in Figure~\ref{fig_retention_input&output} illuminate why. Under DP-opt-in, malicious updates are statistically indistinguishable from benign ones, as both are perturbed by DP-SGD noise. As a result, defenses cannot reliably identify poisoned updates and instead remove updates from both malicious and benign clients without discrimination. Several methods, including Krum, Flame, FreqFed, and FLShield,  exhibit this behavior, incurring Acc degradation without a corresponding reduction in ASR. We provide additional analysis in the Appendix~\ref{app:Geometry}.


\begin{tcolorbox}[colback=gray!10, colframe=gray!40, boxrule=0pt, left=1pt, right=1pt, top=1pt, bottom=1pt, arc=0pt, before skip=6pt, after skip=6pt]
\mybpara{Takeaways.}  
DP noise inadvertently undermines  existing defenses by erasing the statistical distinction between malicious and benign updates. While the DP-opt-in attack is inherently weaker in terms of ASR, existing defenses provide little additional suppression, which means that the residual backdoor signal persists largely undetected. This represents a qualitatively different failure mode from the DP-opt-out case, and motivates our design of \ring, that is, an attack that deliberately exploits this masking effect to combine the stealthiness of the DP-opt-in strategy with the effectiveness of the DP-opt-out strategy.
\end{tcolorbox}
\section{\ring Attack}\label{sec:RING}
\begin{figure}[htbp]
\centering
\includegraphics[scale=0.55]{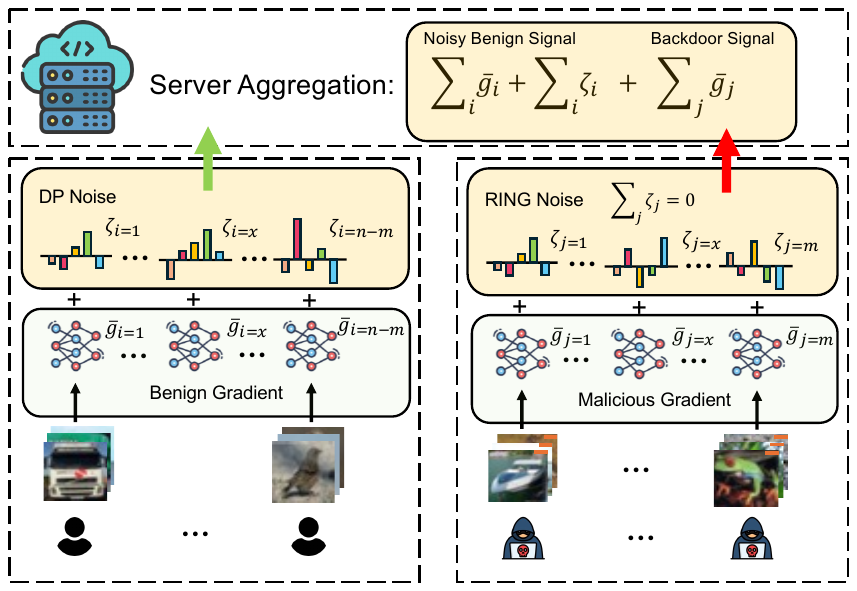}
 \caption{Overview of \ring attack on DP-FL.}
 \label{fig:Overview}
\end{figure}
\subsection{Attack Intuition}

Our investigation reveals two competing objectives that any effective backdoor attacker in DP-FL must jointly satisfy. The first is \textit{effectiveness}: the attacker must recover the backdoor signal suppressed by DP noise, restoring attack strength comparable to the DP-opt-out case and maintaining high ASR after aggregation. The second is \textit{stealthiness}: malicious updates must appear statistically consistent with DP-perturbed benign updates under existing defenses, thereby evading detection.  These objectives are inherently in tension. In other words, improving one typically degrades the other.

We adopt the threat model described in Section~\ref{sec:threat_model}. Figure~\ref{fig:Overview} illustrates the overall framework of \ring. Rather than applying DP noise as benign clients do, malicious clients collaboratively craft adversarial perturbations that jointly satisfy two goals: (1) each individual malicious update is made to resemble a DP-perturbed benign update, improving stealthiness against existing defenses; and (2) the perturbations are coordinated so that they cancel upon aggregation, allowing the underlying backdoor signal to be recovered in the global model.

To formalize this, let $\bar{g}_{j,t}$ denote attacker $j$'s clipped gradient, and let the crafted malicious update be $\tilde{g}_{j,t} = \bar{g}_{j,t} + \zeta_{j,t}$, where $\zeta_{j,t}$ is an attacker-controlled perturbation. Let $\hat{g}_{j,t}$ denote the DP-perturbed gradient as defined in Equation~\eqref{eq:input_backdoor}, $D(\cdot, \cdot)$ a divergence measure between two updates (e.g., Euclidean distance or cosine similarity), and $\mathcal{L}_{\mathrm{bkd}}(\cdot, \mathcal{D}_{adv,j})$ the backdoor loss evaluated on attacker $j$'s held-out validation set $\mathcal{D}_{adv,j}$. The per-attacker objective is $\min_{\zeta_{j,t}} \lambda\,D\big(\tilde{g}_{j,t},\hat g_{j,t}\big) +\xi \mathcal{L}_{\mathrm{bkd}}\big(G_t-\tilde{g}_{j,t}, \mathcal{D}_{adv,j}\big)$. The first term encourages $\tilde{g}_{j,t}$ to be close to a DP-perturbed update $\hat{g}_{j,t}$, improving stealthiness; the second term minimizes backdoor loss, improving effectiveness. As our baseline investigation shows, jointly minimizing these two opposing terms is nontrivial without an additional structural constraint.

Inspired by secret sharing~\cite{delfs2002introduction} in cryptography, where a secret can be recovered from the aggregation of individually randomized shares, we treat the original malicious gradients as the ``secret'' and design the perturbations $\{\zeta_{j,t}\}$ to serve as obfuscating shares. Specifically, let $\mathcal{M}_t$ denote the set of $m$ malicious clients selected in round $t$. We impose the constraint $\sum_{j \in \mathcal{M}_t} \zeta_{j,t} = 0$, ensuring that the perturbations cancel upon aggregation and the cumulative backdoor signal is preserved. The joint optimization over all malicious clients is then:
\begin{align}\nonumber\label{eq:group_attacker_obj}
    \min_{\{\zeta_{j,t}\}_{j\in\mathcal{M}_t}}
&\sum_{j\in\mathcal{M}_t}\big[\lambda\,D\big(\tilde{g}_{j,t},\,\hat g_{j,t}\big) \\\nonumber
&+\xi\,\mathcal{L}_{\mathrm{bkd}}\!\big(G_t-\tilde{g}_{j,t}, \mathcal{D}_{adv,j}\big)\big]\\ 
\quad\text{s.t.}\quad
&\sum_{j\in\mathcal{M}_t} \zeta_{j,t}=0.
\end{align}
Rather than applying a general optimizer to solve \eqref{eq:group_attacker_obj} directly, we present a practical and efficient solution in the next section.

\subsection{Attack Details}\label{sec:attack_details}

We now present the concrete construction of \ring. In each round $t$, the $m$ malicious clients in $\mathcal{M}_t$ are partitioned into $g$ disjoint subgroups ${\mathcal{G}_1, \ldots, \mathcal{G}_g}$, where subgroup $\mathcal{G}_\ell$ contains $m_\ell$ clients. Within each subgroup, the members coordinate their perturbations such that the noise terms cancel upon summation, i.e., \(\sum_{j\in\mathcal{G}_\ell}\zeta_{j,t}=0\). To satisfy this constraint, each client $j \in \mathcal{G}_\ell$ draws an independent sample $z_j \sim \mathcal{N}(0, \sigma^2 I_d)$ and computes its perturbation as
\begin{equation}\label{eq:ring}
\zeta_{j,t} = z_j - \frac{1}{m_\ell} \sum_{k \in \mathcal{G}_\ell} z_k,
\end{equation}
which ensures exact cancellation by construction. The crafted update submitted by client $j$ is then $\tilde{g}_{j,t} = \bar{g}_{j,t} + \zeta_{j,t}$, as defined in Section~\ref{sec:RING}. This construction is agnostic to how subgroups are formed. The only practical requirement is that all members of a subgroup are selected by the server within the same communication round; once selected, the cancellation in \eqref{eq:ring} holds exactly upon aggregation. In Theorem~\ref{theorem:noise_component}, we show that the crafted update $\tilde{g}_{j,t}$ is statistically similar to a DP-perturbed gradient. This provides a formal basis for why \ring achieves stealthiness against defenses that rely on the statistical profile of client updates.

\begin{theorem}\label{theorem:noise_component}
Suppose the noise term $\zeta_j = z_j - \frac{1}{m_l}\sum_{k\in\mathcal{G}_l} z_k$ and $z_k \sim \mathcal{N}(0,\sigma^2 I_d)$ is an \textit{i.i.d.} variable. Then we have $$\zeta_j \sim \mathcal{N}\big(0,\frac{m_l-1}{m_l}\sigma^2 I_d\big).$$
\end{theorem}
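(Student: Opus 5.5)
Since $\zeta_j$ is a fixed linear map applied to the jointly Gaussian vector $(z_k)_{k\in\mathcal{G}_l}$, the plan is to show that $\zeta_j$ is Gaussian and then compute its mean and covariance. Write $\zeta_j = \sum_{k\in\mathcal{G}_l} a_k z_k$ with coefficients $a_j = 1-\frac{1}{m_l}$ and $a_k = -\frac{1}{m_l}$ for $k\neq j$. The $z_k$ are independent and each is $\mathcal{N}(0,\sigma^2 I_d)$, so the stacked vector $(z_k)_{k}$ is a Gaussian in $\mathbb{R}^{m_l d}$. Any linear image of it, in particular $\zeta_j = (a^\top\otimes I_d)\,(z_k)_k$, is therefore Gaussian. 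If one prefers a self-contained argument, the same conclusion follows from characteristic functions: $\mathbb{E}[e^{i u^\top \zeta_j}] = \prod_k \mathbb{E}[e^{i a_k u^\top z_k}] = \exp\!\big(-\tfrac{\sigma^2}{2}\|u\|^2\sum_k a_k^2\big)$, which is the characteristic function of a centered isotropic Gaussian.

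The mean is zero by linearity of expectation, since every $z_k$ is centered. For the covariance, independence of the $z_k$ gives
\begin{equation*}
\mathrm{Cov}(\zeta_j)=\sum_{k\in\mathcal{G}_l} a_k^2\,\sigma^2 I_d .
\end{equation*}
It remains to evaluate the scalar coefficient:
\begin{equation*}
\sum_{k} a_k^2 = \Big(1-\tfrac{1}{m_l}\Big)^2 + (m_l-1)\tfrac{1}{m_l^2} = \tfrac{(m_l-1)^2 + (m_l-1)}{m_l^2} = \tfrac{m_l-1}{m_l}.
\end{equation*}
This yields $\zeta_j\sim\mathcal{N}\big(0,\tfrac{m_l-1}{m_l}\sigma^2 I_d\big)$, as claimed.

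There is no real obstacle here. The one point needing care is the decomposition: $z_j$ itself appears inside the average $\frac{1}{m_l}\sum_k z_k$. One must therefore collect its coefficient as $1-\frac{1}{m_l}$ rather than treating $z_j$ and the average as independent, which would wrongly give variance $(1+\frac{1}{m_l})\sigma^2$.

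Two side remarks are worth recording. First, the $\zeta_j$ within a group are not mutually independent: they are pairwise correlated with cross-covariance $-\frac{1}{m_l}\sigma^2 I_d$ and sum to zero. The theorem, however, concerns only the marginal law, so this does not matter. Second, the isotropic structure $I_d$ carries over directly because the coefficients $a_k$ are scalars.
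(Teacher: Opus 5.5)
Your proof is correct and follows the paper's own argument: it rewrites $\zeta_j$ with coefficient $(1-\frac{1}{m_l})$ on $z_j$ and $-\frac{1}{m_l}$ on each other $z_k$, then sums the squared coefficients times $\sigma^2$ to obtain $\frac{m_l-1}{m_l}\sigma^2$. Your explicit justification that $\zeta_j$ is a centered Gaussian, being a linear image of independent Gaussians, fills in a step the paper simply asserts.
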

\begin{proof}
    See Appendix~\ref{app:Proof_1}.
\end{proof}
When the subgroup size is $m_\ell$, the variance of the perturbation $\zeta_{j,t}$ is $\frac{m_\ell - 1}{m_\ell}\sigma^2 I_d$, compared to $\sigma^2 I_d$ for the DP-SGD noise. This gap shrinks as $m_\ell$ increases, so larger subgroups produce malicious updates that more closely resemble DP-perturbed benign ones. The effect of subgroup size on attack performance is examined empirically in the ablation study (Section~\ref{sec:ablation}).

Equation~\eqref{eq:ring} constitutes a closed-form solution to a relaxed form of the joint objective in \eqref{eq:group_attacker_obj}. By construction, $\zeta_{j,t}$ simultaneously serves two roles: it makes each individual malicious update statistically consistent with a DP-perturbed gradient, minimizing the distance term $D(\tilde{g}_{j,t}, \hat{g}_{j,t})$; and it ensures that the perturbations cancel exactly upon aggregation, allowing the underlying backdoor signal to be recovered in the global model and thus minimizing the backdoor loss $\mathcal{L}_{\mathrm{bkd}}$. Together, these two properties allow \ring to match the stealthiness of the DP-opt-in attack while preserving the effectiveness of the DP-opt-out attack.

The derivation above assumes equal aggregation weights $w_i = 1$ in \eqref{eq:weighted_fedavg}. In practice, weight variations, arising from robust aggregation mechanisms or heterogeneous client dataset sizes, may perturb the exact cancellation in \eqref{eq:group_attacker_obj}. Our empirical evaluation in Section~\ref{sec:results_ring} shows that \ring remains robust to moderate weight deviations and sustain high ASR across a range of realistic deployment conditions.



\section{Theoretical Analysis}\label{sec:performance}

Our experiments (Figure~\ref{fig_retention_input&output}) reveal that existing defenses rarely eliminate all malicious updates and partial removal is the norm in practice. This disrupts the noise cancellation that underpins \ring's effectiveness.  In this section, we analyze three aspects of \ring: the effect of partial aggregation on attack effectiveness (Section~\ref{sec:effectiveness}), the conditions under which crafted updates evade detection (Section~\ref{sec:stealthy}), and the generality of \ring across different backdoor techniques (Section~\ref{sec:generalization}).

\subsection{Partial Aggregation}\label{sec:effectiveness}
Attack effectiveness depends directly on the level of residual noise after partial removal: as more malicious updates are filtered, cancellation becomes less complete and the recovered backdoor signal weakens. We model this by assuming the server retains each malicious update independently with probability $f \in (0,1)$, where $f = \mathbb{E}[s/m]$, $s = |\mathcal{S}_t|$ denotes the number of retained updates, and $|\mathcal{S}_t| \sim \mathrm{Binomial}(m, f)$. Theorem~\ref{thm:effectiveness} quantifies how partial removal affects the aggregate noise error under varying $f$ and subgroup configurations.

\begin{theorem}\label{thm:effectiveness}
    Suppose $m$ malicious clients are divided into $g = m/m_l$ disjoint subgroups of size $m_l$, and each client is retained independently with probability $f$. For the \ring attack, the expected squared norm of the aggregate noise error after a defense is
    \begin{align}
        \mathbb{E}_{\ring}\bigl[\|\mathrm{Err}(f)\|^2\bigr] \approx \frac{d\sigma^2}{m}\cdot\frac{m_l-1}{m_l}\cdot \frac{1-f}{f},
    \end{align}
    where $d$ is the dimension of the noise vectors.
\end{theorem}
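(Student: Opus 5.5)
The plan is to fix a concrete definition of the aggregate noise error, compute its second moment exactly conditional on which clients are retained, and then average over the retention pattern. I take $\mathrm{Err}(f) = \frac{1}{s}\sum_{j\in\mathcal{S}_t}\zeta_j$, the noise left in the average of the $s=|\mathcal{S}_t|$ malicious updates that survive the defense. When $f=1$ the cancellation constraint $\sum_{j\in\mathcal{G}_\ell}\zeta_j=0$ makes this exactly zero, consistent with the claimed factor $1-f$. Since the noise enters linearly and independently across coordinates, I will compute the per-coordinate variance and multiply by $d$ at the end.

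\textbf{Step 1 (one subgroup, fixed retention).} Fix a subgroup $\mathcal{G}_\ell$ and let $R\subseteq\mathcal{G}_\ell$ be its retained members, with $r=|R|$. Substituting the definition in \eqref{eq:ring} gives
\begin{equation*}
\sum_{j\in R}\zeta_j=\Big(1-\tfrac{r}{m_\ell}\Big)\sum_{j\in R}z_j-\tfrac{r}{m_\ell}\sum_{k\in\mathcal{G}_\ell\setminus R}z_k .
\end{equation*}
The $z_k$ are i.i.d.\ $\mathcal{N}(0,\sigma^2 I_d)$, so this sum is Gaussian, as in the computation behind Theorem~\ref{theorem:noise_component}. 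Its per-coordinate variance is
\begin{equation*}
\sigma^2\Big[r\big(1-\tfrac{r}{m_\ell}\big)^2+(m_\ell-r)\tfrac{r^2}{m_\ell^2}\Big]=\sigma^2\,\frac{r(m_\ell-r)}{m_\ell}.
\end{equation*}
This vanishes at $r=0$ and at $r=m_\ell$, matching the intuition that either full removal or full retention of a subgroup leaves no residual noise.

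\textbf{Step 2 (average over retention).} Under the model, $r\sim\mathrm{Binomial}(m_\ell,f)$. Using $\mathbb{E}[r]=m_\ell f$ and $\mathbb{E}[r^2]=m_\ell f(1-f)+m_\ell^2f^2$, we get
\begin{equation*}
\mathbb{E}[r(m_\ell-r)]=m_\ell(m_\ell-1)f(1-f),
\end{equation*}
so the expected per-coordinate variance of one subgroup's retained sum is $(m_\ell-1)f(1-f)\sigma^2$. Distinct subgroups use independent $z$'s and independent retention decisions, so these contributions add across the $g=m/m_\ell$ subgroups. Multiplying by $d$ gives
\begin{equation*}
\mathbb{E}\Big\|\sum_{j\in\mathcal{S}_t}\zeta_j\Big\|^2=d\sigma^2\, m\,\frac{m_\ell-1}{m_\ell}\,f(1-f).
\end{equation*}

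\textbf{Step 3 (normalization, the main obstacle).} The difficulty is the random normalizer $1/s^2$, which is correlated with the retained sum and is undefined when $s=0$. Because the theorem states only an approximation, I will replace $s$ by its mean $mf$, a first-order delta-method step justified by the concentration of $s\sim\mathrm{Binomial}(m,f)$ around $mf$. Conditioning on $s$ and then applying this step gives
\begin{equation*}
\frac{d\sigma^2 m\frac{m_\ell-1}{m_\ell}f(1-f)}{m^2f^2}=\frac{d\sigma^2}{m}\cdot\frac{m_\ell-1}{m_\ell}\cdot\frac{1-f}{f},
\end{equation*}
which is the claimed expression. If more rigor were wanted, I would condition on $s\ge1$ and bound the gap between $\mathbb{E}[X/s^2]$ and $\mathbb{E}[X]/(mf)^2$ by $O(1/(mf)^2)$ relative terms. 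The paper only claims ``$\approx$'', so I would simply state this approximation explicitly.
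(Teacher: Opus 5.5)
Your proposal is correct and follows the paper's proof essentially step for step. It uses the same per-subgroup decomposition, giving a conditional squared norm of $d\sigma^2 r(m_\ell-r)/m_\ell$, and the same binomial identity $\mathbb{E}[r(m_\ell-r)]=m_\ell(m_\ell-1)f(1-f)$ summed over $g=m/m_\ell$ subgroups. It also makes the same replacement of the random normalizer $s^2$ by $(mf)^2$, which you state more explicitly than the paper does.

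One refinement to your optional rigor remark: a second-order expansion gives $\mathbb{E}[X/s^2]\approx \frac{\mathbb{E}[X]}{(mf)^2}\bigl(1+\frac{1+f}{mf}\bigr)$. The correction is therefore of order $1/(mf)$ relative to the main term, and of order $d\sigma^2/(mf)^2$ only in absolute size. With a single subgroup, this correction is at least as large as the $1/m_\ell$ effect carried by the factor $\frac{m_\ell-1}{m_\ell}$.
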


\begin{proof}
    See Appendix~\ref{app:Proof_2}
\end{proof}
As Theorem~\ref{thm:effectiveness} shows, the noise error grows as $f$ decreases and vanishes as $f \to 1$. The result also reveals a subgroup-size trade-off: for a fixed $f$, larger subgroups $m_l$ increase aggregate noise variance, weakening cancellation and degrading attack performance. The attacker can therefore improve resilience against partial removal by using smaller subgroups.


\begin{theorem}\label{thm:err_compare}
  Under identical noise variance $\sigma^2$ and dimension $d$, the expected squared errors for \ring and the DP-opt-in attack satisfy
    \begin{align}
        \frac{\mathbb{E}_{\ring}\bigl[\|\mathrm{Err}(f)\|^2\bigr]}{\mathbb{E}_{\text{DP-opt-in}}\bigl[\|\mathrm{Err}(f)\|^2\bigr]}
        \approx \frac{m_l - 1}{m_l}(1 - f),
    \end{align}
    where $m_l$ is the subgroup size in \ring.
\end{theorem}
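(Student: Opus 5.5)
The plan is to evaluate both expected squared errors under the same normalization and partial-aggregation model used for Theorem~\ref{thm:effectiveness}, then take their ratio. In that model each of the $m$ malicious updates is retained independently with probability $f$. The aggregate noise error is the noise part of the average of the $s=|\mathcal{S}_t|$ retained malicious updates:
\begin{align}
\mathrm{Err}(f)=\frac{1}{s}\sum_{j\in\mathcal{S}_t}\zeta_j .
\end{align}
The same first-order approximation as in Theorem~\ref{thm:effectiveness} then applies: $s$ is replaced by its mean $mf$ in the denominator, i.e.\ $\mathbb{E}[1/s^2]\approx 1/(mf)^2$. The numerator of the ratio is taken directly from Theorem~\ref{thm:effectiveness}, $\frac{d\sigma^2}{m}\cdot\frac{m_l-1}{m_l}\cdot\frac{1-f}{f}$.

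To check that the normalization is consistent, I will briefly re-derive that numerator in the form needed.
\begin{itemize}
\item Fix a subgroup $\mathcal{G}_l$ and let $R\subseteq\mathcal{G}_l$ be its retained members, with $|R|=r$.
\item By \eqref{eq:ring}, $\sum_{j\in R}\zeta_j=(1-\tfrac{r}{m_l})\sum_{j\in R}z_j-\tfrac{r}{m_l}\sum_{k\in\mathcal{G}_l\setminus R}z_k$.
\item Since the $z_k$ are i.i.d.\ $\mathcal{N}(0,\sigma^2 I_d)$, this sum has expected squared norm $d\sigma^2\,r(m_l-r)/m_l$.
\item Averaging over $r\sim\mathrm{Binomial}(m_l,f)$ uses $\mathbb{E}[r(m_l-r)]=m_l(m_l-1)f(1-f)$.
\item Subgroups are independent, so summing over the $g=m/m_l$ subgroups gives $d\sigma^2 m\frac{m_l-1}{m_l}f(1-f)$.
\item Dividing by $(mf)^2$ recovers the expression in Theorem~\ref{thm:effectiveness}.
\end{itemize}

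For the DP-opt-in attack, each malicious client adds its own independent noise $z_j\sim\mathcal{N}(0,\sigma^2 I_d)$ with no cancellation. Conditional on $s$, the retained noise sum $\sum_{j\in\mathcal{S}_t}z_j$ has expected squared norm $s\,d\sigma^2$. Its average therefore has expected squared norm $d\sigma^2/s$. With the same approximation $s\approx mf$, this gives $\mathbb{E}_{\text{DP-opt-in}}[\|\mathrm{Err}(f)\|^2]\approx \frac{d\sigma^2}{mf}$. The key point here is that the DP-opt-in error does not vanish as $f\to1$, in contrast to \ring.

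Dividing the two expressions, the common factors $d\sigma^2/(mf)$ cancel, leaving $\frac{m_l-1}{m_l}(1-f)$ as claimed.

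The main obstacle is making the approximation step consistent between the two attacks. The DP-opt-in term naturally involves $\mathbb{E}[1/s]$, while the \ring term is more naturally computed as $\mathbb{E}[\|\sum\zeta\|^2]/\mathbb{E}[s]^2$. I would state explicitly that both use the same concentration approximation $s\approx mf$, which is valid for large $m$ and $f$ bounded away from $0$. The ratio is then exact to first order, and the approximation errors of the two terms are of the same relative order, $O(1/(mf))$.
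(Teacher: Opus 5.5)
Your proposal is correct and follows essentially the same route as the paper. Like the paper, you compute the DP-opt-in error as $d\sigma^2/|\mathcal{S}_t| \approx d\sigma^2/(mf)$, take the \ring error from Theorem~\ref{thm:effectiveness} (your re-derivation of it agrees with the appendix computation), and divide; your explicit note that both terms use the same concentration approximation $|\mathcal{S}_t|\approx mf$ states openly what the paper assumes implicitly.
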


\begin{proof}
    See Appendix~\ref{app:Proof_3}
\end{proof}
Since $\frac{m_l-1}{m_l}(1-f) < 1$ for any $m_l > 1$ and $f \in (0,1)$, \ring consistently produces less residual noise than the DP-opt-in attack under any partial-removal regime. This is a meaningful distinction from naive secret sharing, where removing any subset of shares does not reduce the residual noise magnitude. Here, partial retention proportionally reduces the noise error, yielding a stronger backdoor signal than DP-opt-in regardless of the retention rate. Section~\ref{sec:mitigation} further evaluates robustness under random client dropping, confirming that \ring maintains high ASR even under substantial removal rates.


\subsection{Stealthiness}\label{sec:stealthy} 

Most existing defenses detect backdoor attacks by measuring similarity or statistical deviation among client updates~\cite{fereidooni2023freqfed,kabir2024flshield,krauss2023mesas,nguyen2022flame,blanchard2017machine,rieger2022deepsight}. A model update can be decomposed into a clipped gradient $\bar{g}$ and a noise term $\zeta$. Therefore, the update of a malicious client $j$ is
\begin{align}\nonumber
    \tilde{g}_j = \bar{g}_j + \zeta_j, \ \zeta_j \sim \mathcal{N}(0,\frac{m_l-1}{m_l}\sigma^2I_d)
\end{align}
while the update of a benign client $i$ is
\begin{align}\nonumber
    \tilde{g}_i = \bar{g}_i + \zeta_i, \ \zeta_i \sim \mathcal{N}(0,\sigma^2I_d).
\end{align}

Under a similarity-based defense, the server computes $D(\tilde{g}_i, \tilde{g}_j) = D(\bar{g}_i + \zeta_i, \bar{g}_j + \zeta_j)$. This quantity reflects two distinct sources of discrepancy: $D(\bar{g}_i, \bar{g}_j)$, which captures differences in the underlying data distributions, and $D(\zeta_i, \zeta_j)$, which reflects the statistical difference between DP noise and the \ring perturbation. The first term is data-dependent and outside the attacker's control; in non-iid settings it can be large, in which case malicious updates may remain detectable regardless of how well the noise is designed. \ring does not remove this source of detectability. When $D(\bar{g}_i, \bar{g}_j)$ is small, however, detection is primarily governed by $D(\zeta_i, \zeta_j)$, where the attacker has direct influence.

The constraint $\sum_{j \in \mathcal{G}_\ell} \zeta_{j,t} = 0$ in Equation~\eqref{eq:ring} introduces negative correlations among the perturbations within each subgroup. In the minimal case of $m_l = 2$, the two noise vectors satisfy $\zeta_a = -\zeta_b$ almost surely, which makes the pair geometrically conspicuous to similarity-based detectors. As $m_l$ increases, both the variance gap relative to DP noise (a factor of $\tfrac{m_l - 1}{m_l}$) and the magnitude of negative correlations decrease. Empirically, a moderate group size such as $m_l = 4$ can substantially reduce the detectability contributed by $D(\zeta_i, \zeta_j)$, as we show in Section~\ref{sec:ablation}. Even so, a large $D(\bar{g}_i, \bar{g}_j)$ may still cause false positives regardless of the perturbation design, and we observe this behavior in several non-iid settings in Section~\ref{sec:results_ring}.

\vspace{3pt}
\textbf{A Large or Small $m_l$?} The analysis above reveals a trade-off: increasing $m_l$ improves perturbation-side stealthiness at the cost of higher aggregate noise variance under partial removal (Theorem~\ref{thm:effectiveness}), while a large data-dependent gap $D(\bar{g}_i, \bar{g}_j)$ limits the overall benefit. In practice, the attacker should select $m_l$ to balance these competing factors. Notably, our experiments in Sections~\ref{sec:results_ring} and~\ref{sec:ablation} show that even when all malicious clients are placed in a single group, \ring still achieves high ASR.


\subsection{Generalization}\label{sec:generalization} 
The decomposition above also clarifies the generality of \ring. \ring operates as an adversarial perturbation mechanism on the noise term $\zeta_j$, and leaves the clipped task gradient $\bar{g}_j$ produced by the attacker's chosen backdoor objective intact. As a result, \ring is agnostic to the source of $\bar{g}_j$: different backdoor techniques affect only the task-gradient component, while \ring governs the perturbation component responsible for stealthiness and aggregation-level cancellation. In our main evaluation, we demonstrate this with visible-trigger image backdoors~\cite{gu2017badnets} and edge-case text backdoors~\cite{wang2020attack}, both of which remain effective against state-of-the-art defenses. Section~\ref{sec:results_ring} further validates this generality on more advanced baselines, including DBA~\cite{xie2019dba} and the Neurotoxin attack~\cite{pmlr-v162-zhang22w}.


\begin{figure*}[htbp]
\centering

\includegraphics[scale=0.335]{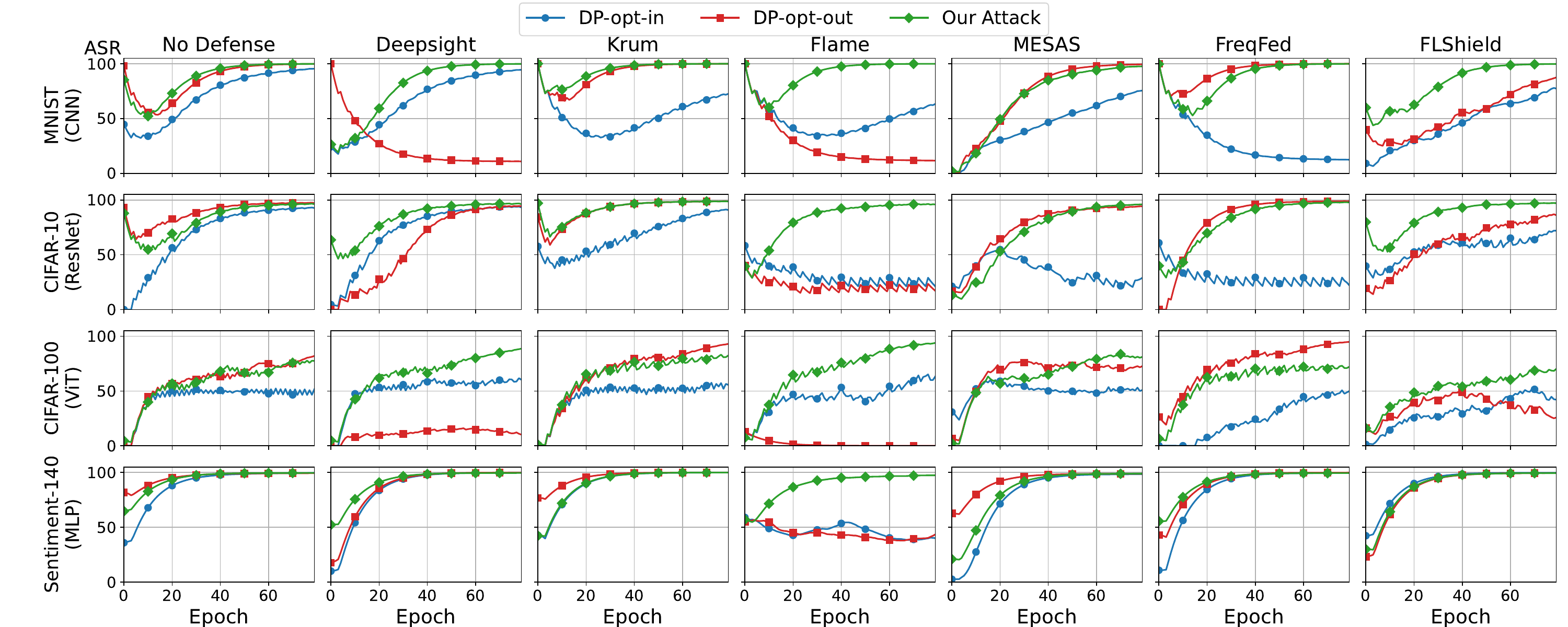}

 \caption{ASR of DP-opt-in, DP-opt-out, and \ring attacks in the prob non-iid setting.}
 \label{fig:ASR_e=5_m=6}
 \vspace{-8pt}
\end{figure*}

\begin{figure*}[htbp]
\centering

\includegraphics[scale=0.304]{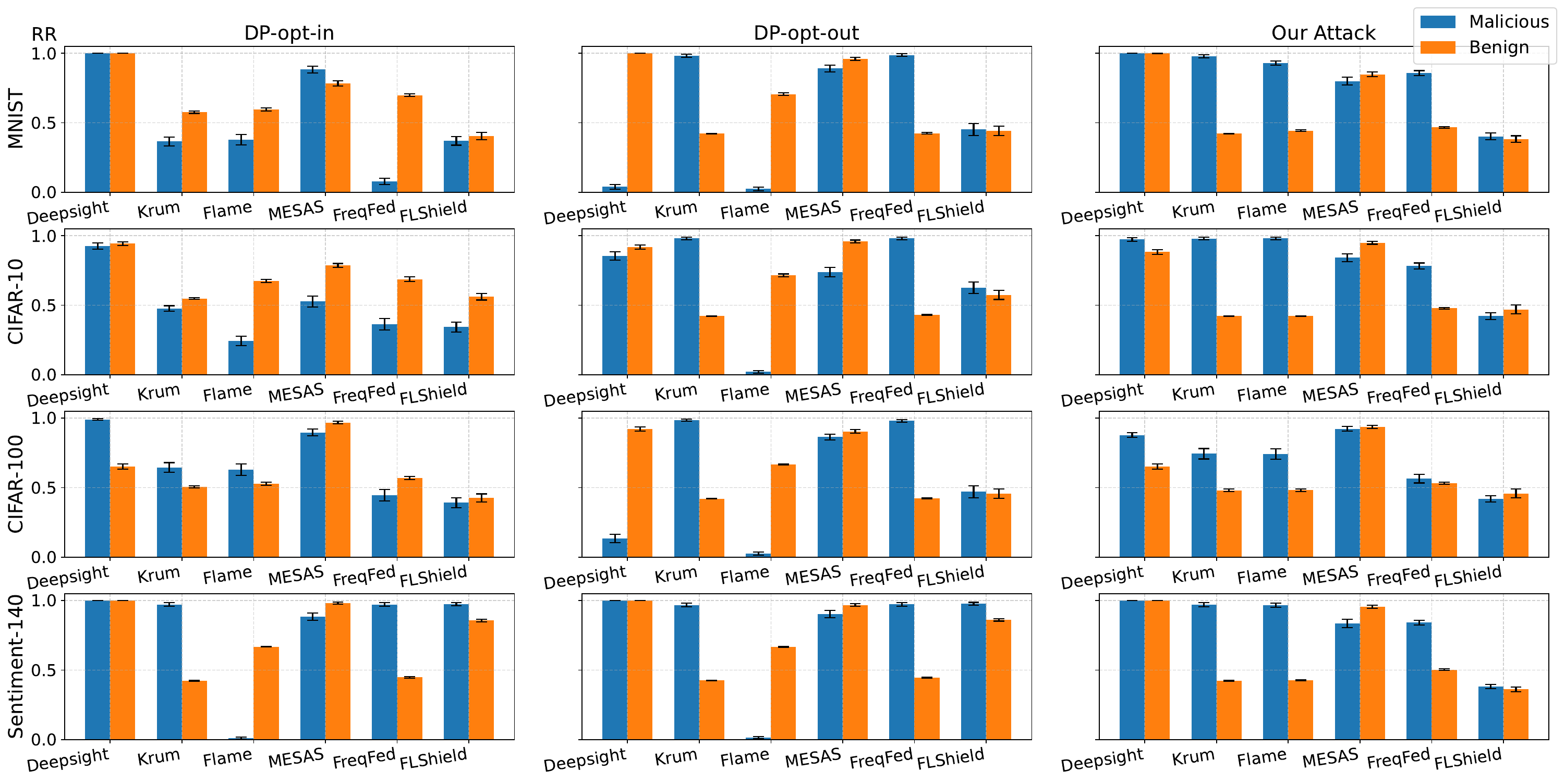}

 \caption{Retention rate of DP-opt-in, DP-opt-out and \ring attacks in the prob non-iid setting.}
 \label{fig:RR_e=5_m=6}
 \vspace{-10pt}
\end{figure*}

\section{Experiments}\label{sec:main_exp} 
\subsection{Experimental Setting}


\textbf{Datasets and Models.}
We evaluate our attack on four datasets used in prior work~\cite{fereidooni2023freqfed,rieger2022crowdguard,krauss2023mesas,xie2023unraveling}: MNIST, CIFAR-10, and CIFAR-100 for image tasks, and Sentiment-140~\cite{go2009twitter} for text. MNIST and CIFAR-10 each contain 10 classes, CIFAR-100 contains 100 classes, and Sentiment-140 comprises 1.6 million tweets annotated for binary sentiment; we randomly sample $5\%$ of Sentiment-140 for our experiments. We employ CNN, ResNet-18, ViT, and fully connected neural networks, respectively, for the four datasets. Details about datasets and model architectures are summarized in Table~\ref{tab:models} (Appendix~\ref{app_sec:datasets_models}). 
We evaluate three non-iid data distributions with default parameters:
\begin{enumerate}
\item \textbf{Probability-based (prob).} Each client $i$ has a dominant class $c_i$ drawn uniformly from $K$ classes, with sampling probabilities $p_i(c_i) = \tau$ and $p_i(k \neq c_i) = \frac{1-\tau}{K-1}$, where $\tau = 0.5$.
\item \textbf{Dirichlet-based (dir).} Each client receives a variable number of samples per label, with potentially zero samples for some labels, drawn according to a Dirichlet$(\alpha = 0.5)$ distribution over class proportions.
\item \textbf{Quantity-based (qty).} Each client is assigned data from a fixed number of classes sampled uniformly from the global dataset: 2 classes for MNIST and CIFAR-10, 20 for CIFAR-100, and 1 for Sentiment-140.
\end{enumerate}



\begin{table}[bthp]
\setlength\tabcolsep{8pt}
\small
\centering
\caption{Default Parameters.}
\label{tab:default_param}
\resizebox{\columnwidth}{!}{%
\begin{tabular}{c|c}
\hline
\textbf{Parameters} & \textbf{Default Values} \\
\hline
\#clients $n$ & 120 \\
\#clients per round $r$ & 0.25 \\
\%malicious clients $\beta$ & 0.2 \\
PDR & 0.5 \\
Privacy budget $\epsilon$ & 5, 20 (CIFAR-10/100) \\
Clipping bound $C$ & 10 \\
Learning rate $\eta$ & 0.05, 0.01 (CIFAR-10/100), 0.1 (Sentiment-140) \\
\#local epochs & 5, 10 (CIFAR-10/100) \\
\hline
\end{tabular}
}
\end{table}

\textbf{Setup.} Unless otherwise specified, all experiments follow the configurations in Section~\ref{sec:investigation}, with default parameters summarized in Table~\ref{tab:default_param}. In the ablation study in Section~\ref{sec:ablation}, we further assess attack performance across various key parameters and conditions. 

For image datasets, we use the visible-trigger backdoor described in Section~\ref{sec:backdoor_background}. For Sentiment-140, we adopt the edge-case text backdoor~\cite{wang2020attack}. We report the same metrics as in Section~\ref{sec:investigation}: ASR, Acc, and retention rate (RR), where RR is reported with its 95\% confidence interval.

For Krum, which requires an estimate of the number of malicious clients, we supply the true value to reflect its best-case detection performance. We adopt the weighted FedAvg as in Section~\ref{sec:Fed_Learn}. Since \ring performs best when $w_i=1$, this configuration places our attack at a disadvantage relative to its optimal setting; our results nonetheless show that \ring maintains high ASR throughout.


All experiments are implemented in Python with PyTorch and run on two GPU servers: one with Intel Xeon Platinum 8468 CPUs, 8 NVIDIA H100 GPUs, and 1~TB of memory; the other with AMD EPYC 9554 CPUs, 3 NVIDIA RTX~6000~Ada GPUs, and 768~GB of memory. 



\begin{table*}[htbp]
\centering
\scriptsize

 \setlength{\tabcolsep}{3.2pt}
\caption{ASR, Acc (\%) and RR (malicious/benign,\%) of \ring, DP-opt-in, and DP-opt-out attacks with varying $m$.}
\label{tab:ASR_ablation_m}
\resizebox{\textwidth}{!}{%
\begin{tabular}{c|ccccccccccccccc}
\toprule
\multirow{2}{*}{Dataset} &\multirow{2}{*}{}  & \multirow{2}{*}{m($\beta$)} & {No Defense}& \multicolumn{2}{c}{Deepsight} & \multicolumn{2}{c}{Krum} & \multicolumn{2}{c}{Flame} & \multicolumn{2}{c}{MESAS} & \multicolumn{2}{c}{FreqFed} & \multicolumn{2}{c}{FLShield} \\
& & & ASR / Acc& ASR / Acc&RR &ASR / Acc&RR &ASR / Acc&RR &ASR / Acc&RR&ASR / Acc&RR &ASR / Acc&RR\\
\midrule
\multirow{9}{*}{MNIST} 
& \multirow{3}{*}{DP-opt-in} & 2 (6.7\%) & 40.89 / 89.44 & 30.14 / 90.15 & 100 / 100 & 20.15 / 89.27 & 41 / 82 & 13.54 / 87.23 & 18 / 57 & 13.38 / 83.27 & 47 / 88 & 11.41 / 88.70 & 6 / 58 & 18.96 / 88.72 & 34 / 41 \\
& & 4 (13.3\%) & 82.31 / 90.70 & 75.72 / 90.59 & 100 / 100 & 54.90 / 90.20 & 43 / 70 & 30.37 / 90.34 & 29 / 58 & 64.21 / 84.97 & 87 / 78 & 13.50 / 90.50 & 12 / 62 & 83.25 / 87.46 & 39 / 38 \\
& & 6 (20.0\%) & 96.56 / 89.14 & 95.61 / 88.45 & 100 / 100 & 77.49 / 89.68 & 36 / 57 & 69.25 / 87.85 & 37 / 59 & 79.60 / 83.33 & 88 / 78 & 12.12 / 89.40 & 7 / 69 & 82.19 / 87.20 & 37 / 40 \\\cline{2-16} 
& \multirow{3}{*}{DP-opt-out} & 2 & 88.91 / 92.16 & 68.18 / 90.75 & 73 / 98 & 95.28 / 89.77 & 99 / 78 & 11.39 / 87.90 & 2 / 57 & 14.54 / 58.26 & 6 / 98 & 99.61 / 89.60 & 97 / 50 & 22.53 / 89.01 & 45 / 43 \\
& & 4 & 99.85 / 90.48 & 57.87 / 90.57 & 34 / 92 & 99.99 / 90.13 & 98 / 61 & 11.21 / 88.04 & 2 / 63 & 82.18 / 83.49 & 53 / 95 & 99.99 / 89.74 & 97 / 46 & 80.81 / 86.57 & 44 / 43 \\
& & 6 & 99.95 / 90.86 & 10.66 / 91.19 & 4 / 100 & 99.99 / 91.04 & 98 / 42 & 11.36 / 89.50 & 2 / 70 & 99.80 / 88.11 & 89 / 95 & 100.0 / 92.12 & 98 / 42 & 91.89 / 88.19 & 45 / 44 \\\cline{2-16} 
& \multirow{3}{*}{Our Attack} & 2 & 86.92 / 90.26 & \cellcolor{gray!15}\textbf{90.49} / \textbf{91.06} & 94 / 99 & 94.11 / \cellcolor{gray!15}\textbf{91.99} & 99 / 78 & \cellcolor{gray!15}\textbf{83.10} / \textbf{90.03} & 44 / 55 & 3.88 / 52.58 & 5 / 97 & \cellcolor{gray!15}\textbf{99.95} / \textbf{91.73} & 97 / 50 & \cellcolor{gray!15}\textbf{90.54} / \textbf{90.38} & 46 / 44 \\
& & 4 & 99.47 / 88.78 & \cellcolor{gray!15}\textbf{99.68} / \textbf{91.99} & 96 / 99 & \cellcolor{gray!15}\textbf{99.99} / 89.71 & 98 / 61 & \cellcolor{gray!15}\textbf{99.97} / 89.20 & 99 / 46 & \cellcolor{gray!15}\textbf{93.46} / \textbf{87.09} & 60 / 89 & \cellcolor{gray!15}\textbf{99.99} / \textbf{91.34} & 94 / 50 & \cellcolor{gray!15}\textbf{99.22} / \textbf{89.67} & 39 / 42 \\
& & 6 & \cellcolor{gray!15}\textbf{99.98} / \textbf{91.70} & \cellcolor{gray!15}\textbf{99.99} / \textbf{91.37} & 100 / 99 & \cellcolor{gray!15}\textbf{100.0} / \textbf{91.52} & 97 / 42 & \cellcolor{gray!15}\textbf{99.99} / \textbf{90.42} & 92 / 44 & 98.12 / 83.85 & 79 / 84 & \cellcolor{gray!15}\textbf{100.0} / 91.84 & 85 / 46 & \cellcolor{gray!15}\textbf{99.96} / \textbf{89.68} & 40 / 38 \\
\midrule
\multirow{8}{*}{\shortstack{Sentiment\\-140}}
& \multirow{3}{*}{DP-opt-in} & 2 & 93.18 / 65.11 & 95.75 / 59.67 & 100 / 100 & 96.46 / 65.01 & 98 / 78 & 35.36 / 58.22 & 1 / 57 & 19.96 / 57.89 & 7 / 98 & 99.19 / 60.96 & 97 / 52 & 97.26 / 62.89 & 97 / 97 \\
& & 4 & 99.61 / 58.12 & 98.45 / 54.45 & 100 / 100 & 98.57 / 59.16 & 97 / 61 & 54.05 / 55.37 & 1 / 61 & 99.25 / 60.30 & 86 / 97 & 99.16 / 57.83 & 97 / 48 & 98.71 / 60.16 & 98 / 86 \\
& & 6 & 99.62 / 59.25 & 99.96 / 65.36 & 100 / 100 & 100.0 / 63.83 & 97 / 42 & 41.65 / 55.20 & 1 / 66 & 98.48 / 62.64 & 88 / 98 & 99.75 / 60.54 & 97 / 44 & 99.76 / 58.71 & 97 / 85\\\cline{2-16} 
& \multirow{3}{*}{DP-opt-out} & 2 & 95.07 / 57.95 & 94.69 / 64.80 & 100 / 100 & 94.20 / 61.16 & 97 / 78 & 39.38 / 55.30 & 1 / 57 & 37.94 / 56.98 & 4 / 98 & 97.93 / 58.41 & 97 / 52 & 52.89 / 55.81 & 1 / 8 \\
& & 4 & 97.49 / 56.50 & 99.22 / 59.58 & 100 / 100 & 99.87 / 59.09 & 97 / 61 & 39.89 / 57.95 & 1 / 61 & 99.22 / 58.96 & 74 / 96 & 100.0 / 57.67 & 97 / 48 & 99.24 / 59.40 & 97 / 86\\
& & 6 & 99.39 / 59.08 & 99.98 / 59.02 & 100 / 100 & 100.0 / 55.32 & 96 / 42 & 47.73 / 57.03 & 1 / 66 & 98.84 / 63.48 & 90 / 96 & 99.92 / 57.34 & 97 / 44 & 99.58 / 58.83 & 97 / 85 \\\cline{2-16} 
& \multirow{3}{*}{Our Attack} & 2 & \cellcolor{gray!15}\textbf{95.55} / 55.86 & \cellcolor{gray!15}93.37 / \textbf{66.62} & 100 / 100 & \cellcolor{gray!15}\textbf{96.74} / 59.67 & 98 / 78 & 24.86 / 57.28 & 1 / 57 & \cellcolor{gray!15}\textbf{52.58} / \textbf{62.58} & 5 / 98 & 98.37 / 59.25 & 97 / 52 & \cellcolor{gray!15}90.65 / \textbf{63.77} & 49 / 49 \\
& & 4 & 99.59 / 56.98 & 98.36 / 59.50 & 100 / 100 & \cellcolor{gray!15}98.81 / \textbf{63.59} & 97 / 61 & \cellcolor{gray!15}\textbf{94.88} / 53.37 & 97 / 46 & \cellcolor{gray!15}97.11 / \textbf{66.36} & 90 / 96 & \cellcolor{gray!15}99.62 / \textbf{64.36} & 91 / 50 & \cellcolor{gray!15}97.86 / \textbf{63.47} & 43 / 49 \\
& & 6 & \cellcolor{gray!15}99.61 / \textbf{62.54} & 99.61 / 61.65 & 100 / 100 & \cellcolor{gray!15}\textbf{100.0} / 55.01 & 96 / 42 & \cellcolor{gray!15}\textbf{97.55} / 54.77 & 96 / 42 & \cellcolor{gray!15}\textbf{99.08} / 58.58 & 83 / 95 & 99.50 / 56.77 & 84 / 50 & \cellcolor{gray!15}99.47 / \textbf{60.11} & 38 / 36 \\
\bottomrule
\end{tabular}
}

\end{table*}

\subsection{Main Results}\label{sec:results_ring}
As discussed in Section~\ref{sec:performance}, subgroup size $m_l$ governs the trade-off between stealthiness and effectiveness: larger subgroups reduce per-client detectability but dilute the per-round backdoor signal. Since the DP-opt-in attack — which relies solely on DP noise for masking — remains partially detectable under certain tasks and non-iid settings, we set $m_l = m$, placing all malicious clients in a single group to maximize stealthiness.

Since main-task accuracy under all attacks is close and consistent across datasets, we report it separately in Figure~\ref{fig:Acc_e=5_m=6} (Appendix~\ref{app_sec:results_backdoor}) and focus on ASR in Figure~\ref{fig:ASR_e=5_m=6}. These results confirm that \ring does not degrade main-task performance, consistent with standard backdoor attack behavior. Figure~\ref{fig:RR_e=5_m=6} reports the corresponding retention rates for stealthiness evaluation. As image task results are consistent across non-iid settings, we focus on the prob non-iid configuration and defer the remaining results (Figures~\ref{apx_fig:ASR_e=5_m=6_qty}--\ref{apx_fig:RR_e=5_m=6_dir}) to Appendix~\ref{app_sec:results_backdoor}. Across all datasets, \ring substantially outperforms both DP-opt-in and DP-opt-out baselines in ASR and evades existing defenses in most cases. We have the following key observations.

\textbf{Attack Performance without Defense.} As shown in Figure~\ref{fig:ASR_e=5_m=6}, \ring achieves ASR comparable to the DP-opt-out attack, which serves as a practical upper bound on ASR in DP-protected FL without defenses. This confirms that \ring successfully recovers the backdoor signal from perturbed updates, consistent with the objective in Equation~\ref{eq:group_attacker_obj}. Under the relatively high privacy budget of $\epsilon = 5$, the DP-opt-in attack reaches similar ASR; however, as shown in Section~\ref{sec:ablation}, its performance degrades substantially as $\epsilon$ decreases.


\vspace{3pt}
\textbf{Attack Performance with Defense.} \ring demonstrates strong resilience against state-of-the-art defenses. As shown in Figures~\ref{fig:ASR_e=5_m=6} and~\ref{fig:RR_e=5_m=6}, \ring consistently achieves an average ASR of $90.3 \pm 13.8\%$ at the final training round, while the DP-opt-in and DP-opt-out baselines reach only $57.5 \pm 39.3\%$ and $51.9 \pm 46.6\%$, respectively, with substantially higher variability across datasets and defenses. The retention rates corroborate this: none of the defenses succeeds in selectively identifying \ring's malicious updates, whereas both baselines are detectable under most settings regardless of whether DP perturbation is applied. DP-opt-in becomes detectable mainly because it is relatively sensitive to underlying data distributions as analyzed in Section~\ref{sec:stealthy}.

\vspace{3pt}
\textbf{Defense Performance.} Figure~\ref{fig:RR_e=5_m=6} shows that Krum, Flame, and FreqFed inadvertently favor \ring by retaining more malicious updates than benign ones. Krum and FreqFed are more effective at detecting the DP-opt-in attack than the DP-opt-out attack, whereas Flame suppresses both baselines in most settings. DeepSight, MESAS, and FLShield show limited ability to distinguish malicious from benign updates across most configurations.


\vspace{3pt}
\textbf{Image vs. Text Datasets.} As shown in Figure~\ref{fig:RR_e=5_m=6}, all defenses behave consistently against \ring and the DP-opt-out attack across both image and text datasets. Against the DP-opt-in attack, Krum, FreqFed, and FLShield show stronger detection performance on image datasets than on Sentiment-140.


\vspace{3pt}
\textbf{Non-iid Settings.} Data distribution has a pronounced effect on defense behavior. On Sentiment-140, the performance of Krum, Flame, FreqFed, and FLShield against \ring differs substantially between the prob non-iid setting (Figure~\ref{fig:RR_e=5_m=6}) and the dir non-iid setting (Figure~\ref{apx_fig:RR_e=5_m=6_dir}, Appendix~\ref{app_sec:results_backdoor}). Flame, for instance, inadvertently favors \ring under the prob setting but detects malicious updates effectively under the dir setting. The same pattern holds for the DP-opt-in and DP-opt-out baselines.



\begin{table*}[htbp]
\centering
\scriptsize
\setlength{\tabcolsep}{3.7pt}
\caption{ASR, Acc (\%) and RR (malicious/benign,\%) of \ring, DP-opt-in, and DP-opt-out attacks with varying $\epsilon$.}.
\label{tab:ASR_ablation_e}
\resizebox{\textwidth}{!}{%
\begin{tabular}{c|ccccccccccccccc}
\toprule
\multirow{2}{*}{Dataset} &\multirow{2}{*}{}  & \multirow{2}{*}{$\epsilon$} & {No Defense}& \multicolumn{2}{c}{Deepsight} & \multicolumn{2}{c}{Krum} & \multicolumn{2}{c}{Flame} & \multicolumn{2}{c}{MESAS} & \multicolumn{2}{c}{FreqFed} & \multicolumn{2}{c}{FLShield} \\
& & & ASR / Acc & ASR / Acc &RR &ASR / Acc &RR &ASR / Acc &RR &ASR / Acc &RR &ASR / Acc &RR &ASR / Acc &RR\\
\midrule
\multirow{9}{*}{MNIST} 
& \multirow{3}{*}{DP-opt-in} & 1 & 77.27 / 79.32 & 83.51 / 78.92 & 100 / 100  & 41.85 / 64.75 & 41 / 56 & 65.16 / 55.39 & 41 / 56 & 71.23 / 68.53 & 90 / 79 & 60.68 / 64.30 & 42 / 56  & 90.61 / 11.38 & 38 / 37 \\
& & 5 & 96.56 / 89.14 & 95.61 / 88.45 & 100 / 100  & 77.49 / 89.68 & 36 / 57 & 69.25 / 87.85 & 37 / 59 & 79.60 / 83.33 & 88 / 78 & 12.12 / 89.40 & 7 / 69 & 82.19 / 87.20 & 37 / 40 \\
& & 10 & 96.94 / 90.26 & 97.35 / 89.37 & 99 / 95 & 41.26 / 90.53 & 23 / 60 & 13.21 / 88.97 & 10 / 76  & 65.35 / 84.36 & 76 / 74  & 11.42 / 91.44 & 6 / 77 & 74.17 / 90.74 & 32 / 48 \\\cline{2-16} 
& \multirow{3}{*}{DP-opt-out} & 1 & 99.66 / 83.40 & 13.29 / 82.41 & 4 / 100 & 99.98 / 77.35 & 98 / 42 & 12.81 / 64.46 & 2 / 66 & 95.95 / 71.76 & 73 / 97 & 99.98 / 79.22 & 97 / 42  & 48.14 / 10.68 & 37 / 37 \\
& & 5 & 99.95 / 90.86 & 10.66 / 91.19 & 4 / 100  & 99.99 / 91.04 & 98 / 42 & 11.36 / 89.50 & 2 / 70  & 99.80 / 88.11 & 89 / 95  & 100.0 / 92.12 & 98 / 42  & 91.89 / 88.19 & 45 / 44 \\
& & 10 & 99.98 / 92.42 & 21.31 / 90.06 & 22 / 96  & 100.0 / 92.75 & 97 / 42  & 11.21 / 91.20 & 4 / 78 & 99.69 / 88.28 & 82 / 89 & 99.99 / 92.85 & 99 / 42  & 88.22 / 90.41 & 41 / 40 \\\cline{2-16} 
& \multirow{3}{*}{Our Attack} & 1 & \cellcolor{gray!15}99.45 / \textbf{84.13} & \cellcolor{gray!15}\textbf{99.29} / \textbf{86.13} & 100 / 100 & 99.91 / 75.62 & 98 / 42 & \cellcolor{gray!15}\textbf{99.91} / \textbf{72.44} & 94 / 43 & \cellcolor{gray!15}\textbf{99.45} / \textbf{79.42} & 94 / 93 & 99.85 / 77.31 & 89 / 45 & \cellcolor{gray!15}\textbf{98.92} / \textbf{34.64} & 40 / 38 \\
& & 5 & \cellcolor{gray!15}\textbf{99.98} / \textbf{91.70} & \cellcolor{gray!15}\textbf{99.99} / \textbf{91.37} & 100 / 99 & \cellcolor{gray!15}\textbf{100.0} / \textbf{91.52} & 97 / 42 & \cellcolor{gray!15}\textbf{99.99} / \textbf{90.42} & 92 / 44  & 98.12 / 83.85 & 79 / 84 & \cellcolor{gray!15}\textbf{100.0} / 91.84 & 85 / 46 & \cellcolor{gray!15}\textbf{99.96} / \textbf{89.68} & 40 / 38\\
& & 10 & \cellcolor{gray!15}\textbf{99.99} / 91.69 & \cellcolor{gray!15}\textbf{99.97} / \textbf{90.88} & 99 / 97 & 14.43 / 90.21 & 8 / 64 & \cellcolor{gray!15}11.19 / \textbf{91.60} & 9 / 75  & \cellcolor{gray!15}99.40 / \textbf{89.29} & 86 / 85  & 10.88 / 90.04 & 9 / 79  & \cellcolor{gray!15}\textbf{99.52} / 88.18 & 39 / 38 \\
\midrule
\multirow{8}{*}{\shortstack{Sentiment\\-140}}
& \multirow{3}{*}{DP-opt-in} & 1 & 98.90 / 57.88 & 99.07 / 59.17 & 100 / 100  & 99.86 / 59.21 & 96 / 42 & 62.10 / 53.81 & 1 / 66 & 99.27 / 62.04 & 97 / 98 & 99.87 / 60.20 & 97 / 42  & 99.70 / 58.65 & 97 / 85 \\
& & 5 & 99.62 / 59.25 & 99.96 / 65.36 & 100 / 100 & 100.0 / 63.83 & 97 / 42  & 41.65 / 55.20 & 1 / 66 & 98.48 / 62.64 & 88 / 98  & 99.75 / 60.54 & 97 / 44 & 99.76 / 58.71 & 97 / 85 \\
& & 10 & 99.42 / 60.31 & 99.56 / 64.72 & 100 / 100 & 100.0 / 58.02 & 97 / 42  & 46.30 / 56.72 & 1 / 66 & 97.45 / 60.74 & 63 / 95 & 100.0 / 59.43 & 97 / 46 & 99.38 / 55.59 & 97 / 85 \\\cline{2-16} 
& \multirow{3}{*}{DP-opt-out} & 1 & 99.72 / 60.60 & 99.59 / 59.05 & 100 / 100 & 99.86 / 58.13 & 97 / 42 & 42.00 / 51.56 & 1 / 66  & 98.74 / 62.48 & 98 / 96 & 99.59 / 57.86 & 97 / 42 & 99.13 / 57.17 & 97 / 85 \\
& & 5 & 99.39 / 59.08 & 99.98 / 59.02 & 100 / 100 & 100.0 / 55.32 & 96 / 42  & 47.73 / 57.03 & 1 / 66 & 98.84 / 63.48 & 90 / 96  & 99.92 / 57.34 & 97 / 45  & 99.58 / 58.83 & 97 / 85 \\
& & 10 & 100.0 / 60.24 & 99.45 / 60.98 & 100 / 100 & 100.0 / 59.82 & 97 / 42 & 47.67 / 56.12 & 1 / 66 & 98.45 / 59.78 & 75 / 96 & 100.0 / 59.89 & 97 / 45 & 99.22 / 55.36 & 97 / 85 \\\cline{2-16} 
& \multirow{3}{*}{Our Attack} & 1 & \cellcolor{gray!15}\textbf{99.86} / \textbf{61.83} & 99.07 / 54.87 & 100 / 100  & 99.79 / 55.69 & 97 / 42 & \cellcolor{gray!15}\textbf{94.01} / 53.79 & 96 / 42 & 97.74 / 59.01 & 93 / 95  & 99.81 / 58.89 & 82 / 47  & 99.18 / 56.78 & 39 / 36 \\
& & 5 & \cellcolor{gray!15}99.61 / \textbf{62.54} & 99.61 / 61.65 & 100 / 100 & \cellcolor{gray!15}\textbf{100.0} / 55.01 & 96 / 42 & \cellcolor{gray!15}\textbf{97.55} / 54.77 & 96 / 42 & \cellcolor{gray!15}\textbf{99.08} / 58.58 & 83 / 95 & 99.50 / 56.77 & 84 / 50 & \cellcolor{gray!15}99.47 / \textbf{60.11} & 38 / 36 \\
& & 10 & \cellcolor{gray!15}99.76 / \textbf{60.80} & 99.27 / 60.82 & 100 / 100 & 99.87 / 58.79 & 97 / 42 & \cellcolor{gray!15}\textbf{99.86} / \textbf{59.05} & 95 / 42 & \cellcolor{gray!15}98.31 / \textbf{63.18} & 67 / 93  & \cellcolor{gray!15}99.38 / \textbf{63.10} & 84 / 52  & \cellcolor{gray!15}\textbf{100.0} / \textbf{63.05} & 38 / 36 \\
\bottomrule
\end{tabular}
}
\end{table*}

\subsection{Ablation Study}\label{sec:ablation} 
We conduct ablation studies on MNIST and Sentiment-140 to evaluate \ring under five dimensions of variation: the number of malicious clients $m$, the privacy budget $\epsilon$, the clipping bound $C$, the data distribution (iid vs. non-iid), and the underlying backdoor technique. The prob non-iid setting serves as the default unless otherwise specified.


\subsubsection{Impact of $m$}

Table~\ref{tab:ASR_ablation_m} reports ASR, Acc, and RR under varying numbers of malicious clients $m$. ASR increases with $m$ across all attacks, and several defenses show $m$-dependent retention behavior.

    

When there is no defense, ASR increases with $m$ for all attacks. \ring achieves ASR comparable to the DP-opt-out baseline, confirming that the coordinated noise cancellation in \ring can be successfully performed regardless of $m$. 

When the defense presents, \ring achieves ASR above $90\%$ in most configurations, with two exceptions at $m = 2$ under Flame and MESAS. When only two malicious clients collaborate, their noise vectors satisfy $\zeta_a = -\zeta_b$, which produces a geometrically distinctive pattern that both defenses exploit: the average RR of \ring drops to $44\%$ under Flame and $5\%$ under MESAS at $m = 2$. Increasing $m$ to 4 resolves this, improving ASR from $3\%$ to $93\%$ under MESAS and from $24\%$ to $94\%$ under Flame, consistent with the theoretical analayis in Section~\ref{sec:stealthy} that larger subgroups reduce inter-client noise correlation. Across other defense methods and values of $m$, \ring maintains consistently higher ASR than both baselines. Main-task accuracy is largely unaffected by changes in $m$, and the Acc of \ring remains on par with that of the baselines throughout.

\subsubsection{Impact of $\epsilon$}

Table~\ref{tab:ASR_ablation_e} reports ASR, Acc, and RR under varying privacy budgets $\epsilon$. The privacy budget primarily affects the DP-opt-in attack, while \ring remains robust across the range of $\epsilon$ examined.

The DP-opt-in attack is the most sensitive to $\epsilon$: without defenses, its ASR degrades substantially as $\epsilon$ decreases due to increased DP noise magnitude, and under defenses, excessively small $\epsilon$ suppresses the backdoor signal before it can benefit from the noise masking. The DP-opt-out attack maintains high ASR without defenses since no perturbation is applied to malicious updates, but is consistently identified by DeepSight, Flame, and FLShield regardless of $\epsilon$, as its unperturbed updates remain statistically anomalous. \ring avoids both failure modes: aggregation restores the backdoor signal independently of the per-client noise level, and the crafted updates remain statistically consistent with DP-perturbed benign ones across all values of $\epsilon$.

The one exception occurs at $\epsilon = 10$, where the low DP noise magnitude reduces the masking effect, causing a temporary ASR drop for \ring against Krum, Flame, and FreqFed, which is confirmed by the corresponding retention rate drop. ASR recovers as $\epsilon$ decreases. At $\epsilon=5$, \ring substantially outperforms DP-opt-in: $100\%$ vs. $77.49\%$ under Krum, $99.99\%$ vs. $69.25\%$ under Flame, and $100\%$ vs. $12.12\%$ under FreqFed.

As expected, Acc increases with $\epsilon$, and no significant difference is observed across the three attacks.

\begin{table*}[htbp]
\centering
\scriptsize

\setlength{\tabcolsep}{3.15pt}
\caption{ASR, Acc (\%) and RR (\%) of \ring, DP-opt-in, and DP-opt-out attacks with varying $C$.}
\label{tab:ASR_ablation_c}
\resizebox{\textwidth}{!}{%
\begin{tabular}{c|ccccccccccccccc}
\toprule
\multirow{2}{*}{Dataset} &\multirow{2}{*}{}  & \multirow{2}{*}{$C$} & {No Defense}& \multicolumn{2}{c}{Deepsight} & \multicolumn{2}{c}{Krum} & \multicolumn{2}{c}{Flame} & \multicolumn{2}{c}{MESAS} & \multicolumn{2}{c}{FreqFed} & \multicolumn{2}{c}{FLShield} \\
& & & ASR / Acc & ASR / Acc &RR &ASR / Acc &RR &ASR / Acc &RR &ASR / Acc &RR &ASR / Acc &RR &ASR / Acc &RR\\
\midrule
\multirow9*{MNIST} & \multirow3*{DP-opt-in} & 1 & 72.31 / 61.61 & 79.88 / 44.80 & 100 / 75 & 31.81 / 59.23 & 27 / 59 & 42.80 / 58.94 & 43 / 70 & 91.79 / 45.01 & 90 / 81 & 32.38 / 59.97 & 31 / 77 & 36.60 / 62.99 & 22 / 53 \\
 & & 10 & 96.56 / 89.14 & 95.61 / 88.45 & 100 / 100 & 77.50 / 89.68 & 36 / 57 & 69.26 / 87.85 & 37 / 59 & 79.60 / 83.33 & 88 / 78 & 12.12 / 89.40 & 7 / 69 & 82.20 / 87.20 & 37 / 40 \\
 & & 20 & 97.77 / 92.10 & 98.25 / 91.74 & 100 / 100 & 85.28 / 89.07 & 39 / 56 & 90.03 / 88.85 & 38 / 57 & 84.83 / 83.48 & 89 / 78 & 86.04 / 89.10 & 36 / 58 & 91.97 / 85.13 & 39 / 38 \\
\cline{2-16}
 & \multirow3*{DP-opt-out} & 1 & 100.00 / 82.59 & 18.40 / 71.33 & 9 / 96 & 100.00 / 81.53 & 87 / 44 & 21.35 / 60.50 & 2 / 93 & 99.99 / 57.69 & 65 / 93 & 18.08 / 61.31 & 3 / 87 & 99.98 / 73.73 & 38 / 40 \\
 & & 10 & 99.95 / 90.87 & 10.66 / 91.19 & 4 / 100 & 99.99 / 91.05 & 98 / 42 & 11.36 / 89.51 & 2 / 70 & 99.80 / 88.12 & 89 / 95 & 100.00 / 92.12 & 98 / 42 & 91.90 / 88.20 & 45 / 44 \\
 & & 20 & 99.99 / 93.01 & 10.78 / 92.97 & 4 / 100 & 99.97 / 90.60 & 98 / 42 & 10.85 / 90.03 & 2 / 66 & 99.38 / 90.96 & 83 / 97 & 99.99 / 91.87 & 98 / 42 & 54.59 / 82.20 & 39 / 39 \\
\cline{2-16}
 & \multirow3*{Our Attack} & 1 & \cellcolor{gray!15}\textbf{100.00} / 81.74 & \cellcolor{gray!15}\textbf{86.46} / \textbf{71.61} & 42 / 93 & 19.06 / 60.26 & 1 / 66 & \cellcolor{gray!15}16.72 / \textbf{66.97} & 1 / 91 & \cellcolor{gray!15}\textbf{100.00} / \textbf{76.28} & 77 / 97 & 17.46 / 59.26 & 2 / 87 & \cellcolor{gray!15}\textbf{99.99} / \textbf{78.23} & 33 / 41 \\
 & & 10 & \cellcolor{gray!15}\textbf{99.98} / \textbf{91.71} & \cellcolor{gray!15}\textbf{100.00} / \textbf{91.37} & 100 / 99 & \cellcolor{gray!15}\textbf{100.00} / \textbf{91.53} & 97 / 42 & \cellcolor{gray!15}\textbf{100.00} / \textbf{90.43} & 92 / 44 & 98.12 / 83.86 & 79 / 84 & \cellcolor{gray!15}\textbf{100.00} / 91.85 & 85 / 46 & \cellcolor{gray!15}\textbf{99.96} / \textbf{89.69} & 40 / 38 \\
 & & 20 & \cellcolor{gray!15}\textbf{99.99} / 92.26 & \cellcolor{gray!15}\textbf{99.97} / 92.78 & 100 / 100 & \cellcolor{gray!15}\textbf{99.99} / \textbf{90.74} & 98 / 42 & \cellcolor{gray!15}\textbf{100.00} / \textbf{90.20} & 95 / 42 & 94.92 / 89.12 & 83 / 87 & \cellcolor{gray!15}\textbf{100.00} / 90.50 & 90 / 44 & \cellcolor{gray!15}\textbf{99.79} / \textbf{90.16} & 41 / 39 \\
\midrule
\multirow{8}{*}{\shortstack{Sentiment\\-140}} & \multirow3*{DP-opt-in} & 1 & 97.68 / 57.33 & 98.84 / 60.04 & 100 / 100 & 99.61 / 54.75 & 97 / 42 & 18.59 / 59.68 & 1 / 68 & 96.97 / 57.64 & 58 / 96 & 98.46 / 57.88 & 98 / 46 & 98.76 / 58.99 & 97 / 86 \\
 & & 10 & 99.63 / 59.25 & 99.97 / 65.36 & 100 / 100 & 100.00 / 63.83 & 97 / 42 & 41.65 / 55.21 & 1 / 66 & 98.49 / 62.65 & 88 / 98 & 99.75 / 60.55 & 97 / 44 & 99.77 / 58.72 & 97 / 85 \\
 & & 20 & 98.83 / 57.27 & 99.06 / 59.41 & 100 / 100 & 99.97 / 56.75 & 96 / 42 & 46.95 / 54.07 & 1 / 66 & 98.87 / 61.15 & 98 / 96 & 100.00 / 55.85 & 97 / 43 & 99.61 / 61.34 & 97 / 85 \\
\cline{2-16}
 & \multirow3*{DP-opt-out} & 1 & 98.92 / 61.92 & 99.23 / 59.59 & 100 / 100 & 100.00 / 61.09 & 97 / 42 & 70.50 / 60.13 & 1 / 68 & 98.07 / 54.16 & 33 / 97 & 99.61 / 53.41 & 98 / 45 & 98.90 / 57.35 & 97 / 85 \\
 & & 10 & 99.40 / 59.08 & 99.98 / 59.03 & 100 / 100 & 100.00 / 55.33 & 96 / 42 & 47.74 / 57.04 & 1 / 66 & 98.84 / 63.49 & 90 / 96 & 99.92 / 57.34 & 97 / 44 & 99.58 / 58.84 & 97 / 85 \\
 & & 20 & 99.38 / 61.51 & 99.18 / 62.69 & 100 / 100 & 100.00 / 56.58 & 96 / 42 & 64.65 / 55.23 & 1 / 66 & 98.59 / 62.31 & 98 / 96 & 99.97 / 61.92 & 97 / 43 & 99.40 / 58.88 & 97 / 85 \\
\cline{2-16}
 & \multirow3*{Our Attack} & 1 & \cellcolor{gray!15}\textbf{99.23} / 60.51 & \cellcolor{gray!15}\textbf{99.23} / 58.12 & 100 / 100 & \cellcolor{gray!15}\textbf{100.00} / \textbf{61.49} & 97 / 42 & \cellcolor{gray!15}\textbf{100.00} / 57.09 & 83 / 46 & \cellcolor{gray!15}\textbf{98.46} / 56.70 & 39 / 97 & \cellcolor{gray!15}\textbf{100.00} / \textbf{61.26} & 85 / 54 & \cellcolor{gray!15}\textbf{99.26} / 55.56 & 39 / 34 \\
 & & 10 & \cellcolor{gray!15}99.61 / \textbf{62.55} & 99.61 / 61.65 & 100 / 100 & \cellcolor{gray!15}\textbf{100.00} / 55.02 & 96 / 42 & \cellcolor{gray!15}\textbf{97.56} / 54.77 & 96 / 42 & \cellcolor{gray!15}\textbf{99.09} / 58.58 & 83 / 95 & 99.51 / 56.77 & 84 / 50 & \cellcolor{gray!15}99.47 / \textbf{60.11} & 38 / 36 \\
 & & 20 & \cellcolor{gray!15}\textbf{99.44} / \textbf{64.91} & \cellcolor{gray!15}98.53 / \textbf{63.97} & 100 / 100 & \cellcolor{gray!15}99.94 / \textbf{59.92} & 96 / 42 & \cellcolor{gray!15}\textbf{96.59} / \textbf{56.11} & 96 / 42 & \cellcolor{gray!15}\textbf{99.80} / 62.26 & 94 / 96 & 99.64 / 58.85 & 82 / 48 & 99.41 / 61.14 & 39 / 36 \\

\bottomrule
\end{tabular}
}

\end{table*}

\begin{table*}[htbp]
\centering
\scriptsize

\setlength{\tabcolsep}{3.15pt}
\caption{ASR, Acc (\%) and RR(\%) of \ring, DP-opt-in, and DP-opt-out attacks under different data distributions.}
\label{tab:ASR_ablation_dist}
\resizebox{\textwidth}{!}{%
\begin{tabular}{c|ccccccccccccccc}
\toprule
\multirow{2}{*}{Dataset} & \multirow{2}{*}{} & \multirow{2}{*}{Dist.} & {No Defense} & \multicolumn{2}{c}{DeepSight} & \multicolumn{2}{c}{Krum} & \multicolumn{2}{c}{FLAME} & \multicolumn{2}{c}{MESAS} & \multicolumn{2}{c}{FreqFed} & \multicolumn{2}{c}{FLShield} \\
& & & ASR / Acc & ASR / Acc & RR & ASR / Acc & RR & ASR / Acc & RR & ASR / Acc & RR & ASR / Acc & RR & ASR / Acc & RR \\
\midrule
\multirow{6}{*}{MNIST}
& \multirow{2}{*}{DP-opt-in}
& iid  & 94.73 / 91.07 & 90.74 / 91.14 & 100 / 100 & 65.88 / 91.24 & 31 / 59 & 11.04 / 92.10 & 2 / 67 & 82.32 / 89.89 & 86 / 89 & 9.43 / 91.47 & 2 / 67 & 87.66 / 91.11 & 35 / 43 \\
& & prob & 96.56 / 89.14 & 95.61 / 88.45 & 100 / 100 & 77.50 / 89.68 & 36 / 57 & 69.26 / 87.85 & 37 / 59 & 79.60 / 83.33 & 88 / 78 & 12.12 / 89.40 & 7 / 69 & 82.20 / 87.20 & 37 / 40 \\
\cline{2-16}
& \multirow{2}{*}{DP-opt-out}
& iid  & 99.98 / 92.86 & 7.37 / 91.06 & 4 / 100 & 100.00 / 93.44 & 98 / 42 & 11.14 / 90.86 & 2 / 67 & 99.65 / 91.11 & 81 / 97 & 100.00 / 93.34 & 98 / 42 & 93.45 / 90.73 & 42 / 42 \\
& & prob & 99.95 / 90.87 & 10.66 / 91.19 & 4 / 100 & 99.99 / 91.05 & 98 / 42 & 11.36 / 89.51 & 2 / 70 & 99.80 / 88.12 & 89 / 95 & 100.00 / 92.12 & 98 / 42 & 91.90 / 88.20 & 45 / 44 \\
\cline{2-16}
& \multirow{2}{*}{Our Attack}
& iid  & \cellcolor{gray!15}\textbf{100.00} / 92.74 & \cellcolor{gray!15}\textbf{99.94} / \textbf{93.10} & 100 / 100 & \cellcolor{gray!15}\textbf{100.00} / 93.09 & 98 / 42 & \cellcolor{gray!15}\textbf{100.00} / \textbf{92.29} & 96 / 43 & \cellcolor{gray!15}\textbf{99.90} / \textbf{91.67} & 88 / 93 & 99.99 / 92.46 & 74 / 50 & \cellcolor{gray!15}\textbf{99.99} / \textbf{92.65} & 40 / 39 \\
& & prob & \cellcolor{gray!15}\textbf{99.98} / \textbf{91.71} & \cellcolor{gray!15}\textbf{100.00} / \textbf{91.37} & 100 / 99 & \cellcolor{gray!15}\textbf{100.00} / \textbf{91.53} & 97 / 42 & \cellcolor{gray!15}\textbf{100.00} / \textbf{90.43} & 92 / 44 & 98.12 / 83.86 & 79 / 84 & \cellcolor{gray!15}\textbf{100.00} / 91.85 & 85 / 46 & \cellcolor{gray!15}\textbf{99.96} / \textbf{89.69} & 40 / 38 \\
\midrule
\multirow{6}{*}{\shortstack{Sentiment\\-140}}
& \multirow{2}{*}{DP-opt-in}
& iid  & 99.54 / 61.17 & 99.00 / 59.94 & 100 / 100 & 99.92 / 61.45 & 98 / 42 & 53.98 / 56.59 & 2 / 67 & 98.53 / 61.68 & 90 / 98 & 99.69 / 53.80 & 97 / 44 & 99.77 / 63.69 & 98 / 86 \\
& & prob & 99.63 / 59.25 & 99.97 / 65.36 & 100 / 100 & 100.00 / 63.83 & 97 / 42 & 41.65 / 55.21 & 1 / 66 & 98.49 / 62.65 & 88 / 98 & 99.75 / 60.55 & 97 / 44 & 99.77 / 58.72 & 97 / 85 \\
\cline{2-16}
& \multirow{2}{*}{DP-opt-out}
& iid  & 98.61 / 59.61 & 99.69 / 57.71 & 100 / 100 & 99.69 / 57.21 & 98 / 42 & 48.03 / 56.98 & 2 / 67 & 99.07 / 61.45 & 91 / 96 & 99.92 / 59.27 & 98 / 44 & 99.23 / 63.52 & 97 / 85 \\
& & prob & 99.40 / 59.08 & 99.98 / 59.03 & 100 / 100 & 100.00 / 55.33 & 96 / 42 & 47.74 / 57.04 & 1 / 66 & 98.84 / 63.49 & 90 / 96 & 99.92 / 57.34 & 97 / 44 & 99.58 / 58.84 & 97 / 85 \\
\cline{2-16}
& \multirow{2}{*}{Our Attack}
& iid  & 99.38 / \cellcolor{gray!15}\textbf{64.19} & 99.61 / \cellcolor{gray!15}\textbf{60.45} & 100 / 100 & \cellcolor{gray!15}\textbf{100.00} / 60.17 & 98 / 42 & \cellcolor{gray!15}\textbf{97.61} / 56.09 & 97 / 42 & \cellcolor{gray!15}\textbf{99.31} / \textbf{62.35} & 86 / 96 & 99.85 / \cellcolor{gray!15}\textbf{59.50} & 84 / 50 & 99.69 / 60.39 & 31 / 35 \\
& & prob & 99.61 / \cellcolor{gray!15}\textbf{62.55} & 99.61 / 61.65 & 100 / 100 & \cellcolor{gray!15}\textbf{100.00} / 55.02 & 96 / 42 & \cellcolor{gray!15}\textbf{97.56} / 54.77 & 96 / 42 & \cellcolor{gray!15}\textbf{99.09} / 58.58 & 83 / 95 & 99.51 / 56.77 & 84 / 50 & 99.47 / \cellcolor{gray!15}\textbf{60.11} & 38 / 36 \\
\bottomrule
\end{tabular}%
}

\end{table*}

\subsubsection{Impact of $C$} 

Table~\ref{tab:ASR_ablation_c} reports attack performance under varying clipping bounds $C$ in DP-SGD. The impact of $C$ on the three attacks closely parallels the pattern observed for $\epsilon$.

The DP-opt-in attack is the most sensitive to $C$. Tighter clipping reduces gradient magnitude, which weakens the injected backdoor signal both directly and by requiring lower per-round noise magnitude. Without defenses its ASR drops substantially at small $C$, and under defenses it remains fragile even though DP perturbation retains more malicious updates than the DP-opt-out baseline. The DP-opt-out attack maintains high ASR without defenses, but is consistently detected by DeepSight and Flame regardless of $C$, since its unperturbed updates remain statistically anomalous. \ring avoids both failure modes. The backdoor signal is restored during aggregation regardless of per-client clipping, and the crafted updates remain statistically consistent with DP-perturbed benign ones. 

The exception occurs at $C = 1$, where low per-round noise magnitude diminishes the masking effect, leading to reduced ASR for \ring against Krum, Flame, and FreqFed, consistent with the corresponding retention rate drops. This behavior mirrors the $\epsilon = 10$ case. As $C$ increases, the masking effect quickly recovers. At $C = 10$, \ring substantially outperforms DP-opt-in: $100\%$ vs. $77.50\%$ under Krum, $100\%$ vs. $69.26\%$ under Flame, and $100\%$ vs. $12.12\%$ under FreqFed.

As expected, Acc increases with $C$ since stronger clipping imposes less distortion on benign gradients. Main-task accuracy under \ring remains consistent with that of the baselines across all tested settings.

\subsubsection{iid vs. non-iid}


Table~\ref{tab:ASR_ablation_dist} reports ASR, Acc, and RR under iid and non-iid data distributions. \ring remains effective across both settings, confirming that its stealthiness mechanism is robust to data heterogeneity.


Without defenses, all three attacks achieve high ASR under both iid and non-iid configurations. \ring and DP-opt-out maintain stronger and more consistent effectiveness than DP-opt-in, with \ring achieving nearly $100\%$ ASR under both iid and non-iid settings.

Under defenses, non-iid data can make detection more challenging, as benign updates naturally become more diverse and harder to separate from malicious ones. \ring exploits this effect and maintains high ASR across most defenses and distributions. On MNIST, \ring achieves at least $98\%$ ASR under all evaluated defenses in both iid and non-iid settings. DP-opt-out, by contrast, remains vulnerable to DeepSight and Flame despite its strong no-defense performance, and DP-opt-in, while benefiting from DP noise masking in some configurations, produces a weaker and less reliable attack signal, particularly against Flame and FreqFed. On Sentiment-140, \ring delivers strong and stable performance across most defenses, with a particularly notable advantage over both baselines under Flame. Taken together, these results confirm that \ring preserves the backdoor signal while maintaining stealthiness across diverse data distributions and defense configurations.

Main-task accuracy remains comparable across iid and non-iid settings. Where non-iid data reduces Acc due to client heterogeneity, the utility under \ring stays on par with that of the baselines.



\subsubsection{Generality across Backdoor Techniques}
To validate that \ring is agnostic to the choice of backdoor technique, we evaluate it with three representative image backdoor attacks on MNIST: visible-trigger backdoor (VBA)~\cite{gu2017badnets}, distributed backdoor attack (DBA)~\cite{wang2020attack}, and Neurotoxin (NBA)~\cite{pmlr-v162-zhang22w}. As shown in Table~\ref{tab:ASR_vary_backdoor} (Appendix~\ref{app_sec:results_backdoor}), \ring consistently improves attack effectiveness over the DP-opt-in baseline and matches or exceeds the DP-opt-out baseline under most defenses, with no degradation in clean accuracy. These results are consistent with the analysis in Section~\ref{sec:stealthy}: the underlying backdoor technique governs the task-gradient component, while \ring operates at the perturbation level to improve stealthiness and enable aggregation-level backdoor recovery. 

\subsection{Interplay between Privacy, Security and Utility}
Our ablation study reveals how key parameters jointly shape the relationship between privacy, security, and utility in DP-FL. 

The number of malicious clients $m$ is attacker-controlled and primarily determines attack strength: as $m$ grows, the system becomes less robust against \ring.  The privacy and utility tradeoff still holds here as a lower $\epsilon$ strengthens privacy but also reduces the main task accuracy. Unlike the DP-opt-in where a small  $\epsilon$ enhances FL security, our \ring attack is immune to the DP perturbation and remains effective regardless of the value of $\epsilon$. The clipping bound $C$ affects both security and utility but not the privacy directly. Smaller $C$ reduces main-task utility and lowers gradient sensitivity, which in turn reduces the per-round noise magnitude and slightly weakens the masking effect, leading to a modest improvement in system robustness against \ring at the cost of utility.

\section{Mitigation}\label{sec:mitigation}
In this section, we discuss potential mitigation strategies and highlight opportunities and challenges.

\vspace{3pt}
\textbf{Detecting Malicious Users.} 
Figure~\ref{fig:RR_e=5_m=6} shows that Krum, Flame, and FreqFed may inadvertently keep a higher proportion of malicious updates in the prob non-iid setting, suggesting that these methods might be capable of differentiating malicious and benign updates, but in the wrong direction. However, aligning with our theoretical analysis, this distinction fades away once the underlying data distribution changes, as shown in Figure~\ref{apx_fig:RR_e=5_m=6_dir} in Appendix~\ref{app_sec:results_backdoor}. In addition, our analysis in Section~\ref{sec:stealthy} indicates that \ring attack reduces per-client noise variance by a factor of $(m_l-1)/m_l$, creating only a subtle variance gap compared to benign updates.  While this gap becomes more detectable with a larger $\epsilon$, increasing $\epsilon$ simultaneously weakens privacy guarantees, which is concerning if DP is primarily used for privacy enhancement.

\begin{figure}[htbp]
\centering
\includegraphics[scale=0.45]{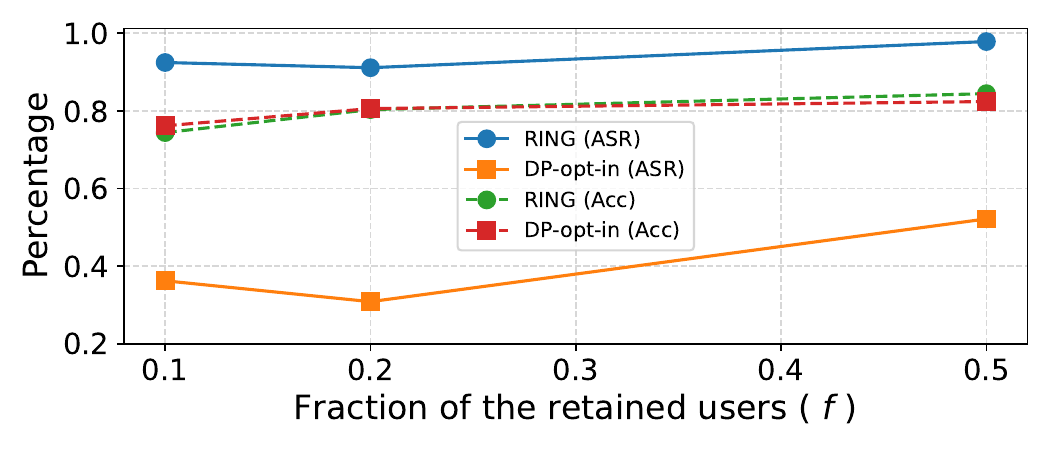}
\caption{Impact of $f$ on ASR and Acc of \ring and DP-opt-in attack under the prob non-iid setting on MNIST ($\epsilon=1, m=4, \eta=0.01$), without defense.}
\label{fig:rd_e=1_m=6}
\end{figure}

\vspace{3pt}
\textbf{Random Removal of Local Updates.} 
An intuitive mitigation strategy is to randomly retain only a fraction of received benign and malicious updates, to disrupt noise cancellation. Section~\ref{sec:effectiveness} provides a theoretical characterization of this effect: for the same retention probability $f$, \ring produces a substantially smaller residual noise magnitude in the aggregated gradient than the DP-opt-in attack, and therefore achieves higher attack gain even under random removal.

Figure~\ref{fig:rd_e=1_m=6} confirms this empirically. \ring consistently outperforms DP-opt-in across all tested values of $f$, consistent with Theorem~\ref{thm:err_compare}. This advantage stems from partial noise cancellation: even when some malicious updates are dropped, the remaining perturbations cancel sufficiently to recover a strong backdoor signal, which is the core effectiveness objective of \ring. As $f$ decreases, the ASR of \ring degrades only slightly, whereas main-task accuracy drops more noticeably, suggesting that random removal imposes a greater utility cost than security benefit against \ring.

\vspace{3pt}
\textbf{Limiting Response Time.} 
A potential server-side defense is to enforce a strict deadline for submitting local updates. Since \ring requires peer coordination among malicious clients, their submission latency is higher than that of benign clients, and a sufficiently tight deadline could exclude malicious updates from aggregation. This timing-based approach also applies to other coordinated attacks such as DBA~\cite{xie2019dba}. For example, with CIFAR-10/ResNet-18 ($\approx 44.8$~MB) over a $1$~Gbps link, one communication round for a benign client takes approximately $0.72s$, while a malicious client incurs roughly $1.44s$ due to peer interaction overhead. Local training adds approximately $5.2s$, with an additional $20ms$ for peer coordination. These estimates suggest a threshold of approximately $6s$ could separate benign from malicious submissions. In practice, however, this approach requires a perfectly synchronized environment with homogeneous compute and network conditions, a prerequisite rarely satisfied in real FL deployments. An overly aggressive threshold may introduce systematic bias toward resource-rich clients, degrading global model performance.

\begin{figure}[htbp]
\centering
\includegraphics[scale=0.45]{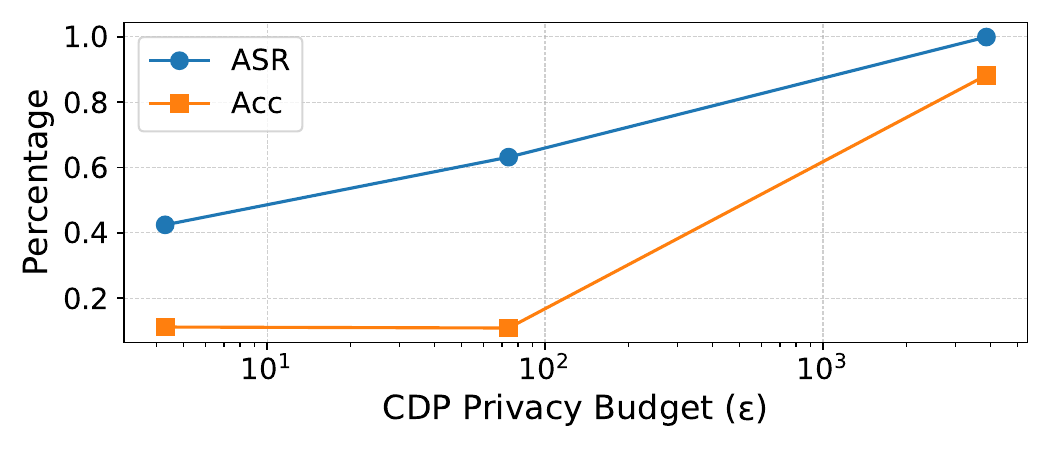}
\caption{Impact of server-side DP on ASR and Acc under the prob non-iid setting on MNIST.}
\label{fig:cdp_e=5_m=6}
\end{figure}

\vspace{3pt}
\textbf{Server-side Perturbation.}  
Another potential defense is to introduce server-side noise to disrupt the noise cancellation property of \ring. Existing methods such as Flame already apply a small amount of post-aggregation noise, but \ring remains effective in this setting, as reflected in the retention rates of Figure~\ref{fig:RR_e=5_m=6}. A stronger variant is to apply central DP, for example, server-side DP-SGD, after aggregation to inject calibrated noise. As shown in Figure~\ref{fig:cdp_e=5_m=6}, tightening the central privacy budget from $\epsilon \approx 3{,}872$ (near-negligible privacy) to $\epsilon \approx 4.3$ (strong privacy) reduces ASR from $99.91\%$ to $42.45\%$, but also drops Acc from $88.24\%$ to $11.18\%$. This severe utility cost arises because benign updates incur greater total noise distortion than malicious ones: benign clients are subject to both client-side and server-side perturbation, whereas malicious clients are only affected by the server-side DP noise. Beyond utility, server-side perturbation complicates the analysis of the system-wide privacy budget. Furthermore, central DP cannot protect against a curious server, and is therefore misaligned with our threat model.

\section{Discussion}

\vspace{3pt}
\textbf{Secure Aggregation.} Secure aggregation (SA)~\cite{segal2017practical} enhances privacy by leveraging cryptographic protocols such as multi-party computation, preventing the server from observing individual client updates and exposing only the gradient aggregate. SA can be applied independently or combined with DP for stronger privacy guarantees~\cite{chen2022poisson,chen2022fundamental,stevens2022efficient}. Crucially, SA does not alter the aggregation result itself. When used alone, the backdoor signal remains intact; when combined with DP, attack performance is expected to match that in standard differentially private FL. The defenses evaluated in our experiments and most backdoor defenses in the literature operate at the client level, inspecting individual updates prior to aggregation. SA renders this class of defenses ineffective, as the cryptographic obfuscation of individual updates prevents the server from accessing the information these methods rely on.

\vspace{3pt}
\textbf{Weaponizing DP in Prior Research.}
Prior work shows that LDP is vulnerable to data poisoning attacks, where adversaries directly manipulate the perturbation function to control the aggregated result~\cite{cao2021data, li2023fine, li2024robustness, Li2025MDPA}. Similarly, \cite{giraldo2020adversarial} study adversarial classification under DP, showing that DP noise can serve as cover for false-data injection: an attacker maximizes numerical bias while keeping the attack distribution close to benign DP outputs under a known detector. These attacks are, however, constrained by the DP noise scale and stealth budget — analogous to DP-opt-in backdoor attacks in FL, where DP noise improves local stealth at the cost of weakening the malicious signal.

Our proposed \ring attack takes a fundamentally different approach. Rather than targeting LDP or low-dimensional analytical outputs, it attacks sample-level DP-SGD in FL and operates on high-dimensional model updates without any knowledge of the deployed defense. Malicious clients coordinate to craft perturbations such that each local update appears statistically indistinguishable from a benign DP-perturbed update, bypassing existing defenses. Critically, these perturbations cancel during aggregation, recovering a backdoor effect close to the noise-free DP-opt-out case. \ring thus simultaneously achieves DP-noise-like local stealth and a strong aggregate backdoor signal — a threat that grows more significant as DP becomes a standard component of privacy-preserving FL systems.
\section{Conclusion}
This work challenges the assumption that differential privacy inherently confers resilience against backdoor attacks in federated learning. We introduce \ring, a novel backdoor attack targeting differentially private FL, and demonstrate through theoretical analysis and empirical evaluation that it evades state-of-the-art defenses while substantially increasing attack success rate. Our findings unfold a new threat landscape that DP can be exploited to weaken, rather than strengthen, the security of AI/ML systems. The result also reveals a fundamental tension between security, privacy, and utility. Service providers who reduce or abandon DP in response to such threats sacrifice the privacy guarantees it was designed to provide -- a trade-off that underscores the urgent need for defenses that address security and privacy in concert.

\section*{Acknowledgments}
This work was supported in part by NSF grants CNS-2238680, CNS-2207204, and CNS‑2247794. We also acknowledge the
computing resources and support provided by Purdue Applied AI Research
Center (AARC). The work used Anvil at
Purdue University through allocation CIS250794 from the Advanced Cyberinfrastructure
Coordination Ecosystem: Services \& Support (ACCESS) program, which is supported
by U.S. National Science Foundation grants \#2138259, \#2138286, \#2138307,
\#2137603, and \#2138296~\cite{boerner2023access}. 
\bibliographystyle{IEEEtran}
\bibliography{cite}

\begin{appendices}


\begin{figure}[htbp]
\centering
\scalebox{0.9}[0.9]{
\includegraphics[scale=0.2]{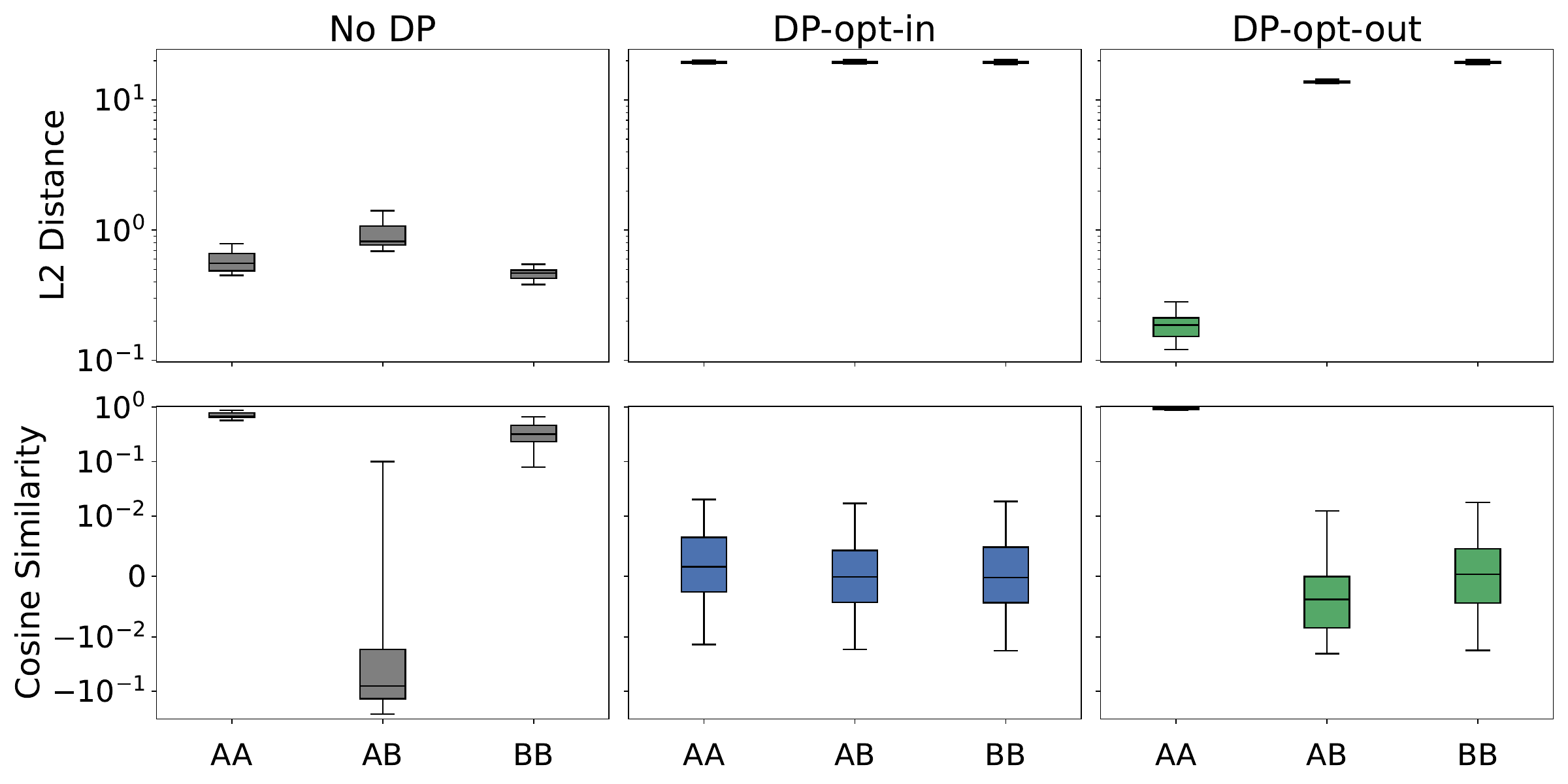}
}
\caption{Boxplots of pairwise $L_2$ distance and cosine similarity for client update pairs under No DP, DP-opt-in, and DP-opt-out.}
 \label{fig_geometry_input&output}
\end{figure}
\section{Impact of DP Noise on Client Update Geometry}\label{app:Geometry}
To explain why DP-opt-in undermines detection-based defenses, we analyze the pairwise geometry of client updates in Figure~\ref{fig_geometry_input&output}. This analysis is directly motivated by how existing defenses operate: Krum relies on pairwise Euclidean distances, Flame and FreqFed use cosine-based clustering, and MESAS combines both. Even for DeepSight and FLShield, which inspect model behavior rather than raw parameter geometry, anomalous update geometry serves as an important discriminative signal. Accordingly, $\ell_2$ distance and cosine similarity together provide natural and representative geometric lenses for understanding defense evasion.

In the no-DP setting, the three pairwise update types exhibit clearly separable geometric patterns. Attacker-attacker (AA) pairs cluster tightly with high cosine similarity and small $\ell_2$ distance. Attacker-benign (AB) pairs show lower similarity and larger distance, while benign-benign (BB) pairs remain moderately well clustered. These geometric separations are the statistical cues that detection-based defenses exploit.

Under DP-opt-in, however, injected DP noise collapses these distinctions. The cosine similarity distributions of AA, AB, and BB pairs all concentrate near zero, and their $L_2$ distances become uniformly large and mutually overlapping. The geometric signal that defenses rely on is effectively erased so that malicious updates become indistinguishable from benign ones in the update space. In contrast, the DP-opt-out attacker preserves the tight alignment of AA pairs, maintaining a detectable geometric anomaly that defenses can, in principle, exploit, but at the cost of the stealthiness that DP noise would otherwise provide.
\vspace{-5pt}
\section{Proof of Theorem \ref{theorem:noise_component}}
\label{app:Proof_1}
\vspace{-5pt}
\begin{proof}
    For each $j\in\mathcal{M}_t$,
    \begin{align} \nonumber
        \zeta_j &= \left[z_j - \frac{1}{m_l}\sum_{k=1}^{m_l} z_k\right] \\ \nonumber
        &= \left[z_j - \frac{1}{m_l} z_j - \frac{1}{m_l}\sum_{k\neq j}z_k\right] \\ \nonumber
        &= \left[(1-\frac{1}{m_l})\cdot z_j - \frac{1}{m_l}\sum_{k\neq j}z_k\right] \nonumber
    \end{align}
    Since all $z_k$ are independent with variance $\sigma^2$, the variance of $\zeta_j$ is 
    \begin{align}\nonumber
        Var[\zeta_j] &= \left[(1-\frac{1}{m_l})^2 \sigma^2 + (m_l-1)(\frac{1}{m_l})^2\sigma^2\right]\\ 
        &=\frac{m_l-1}{m_l}\cdot \sigma^2.\nonumber
    \end{align}
    Therefore, $\zeta_j\sim \mathcal{N}(0,\frac{m_l-1}{m_l} \sigma^2I_d)$.
\end{proof}

\begin{figure*}[htbp]
\centering
\scalebox{0.9}[0.9]{
\includegraphics[scale=0.35]{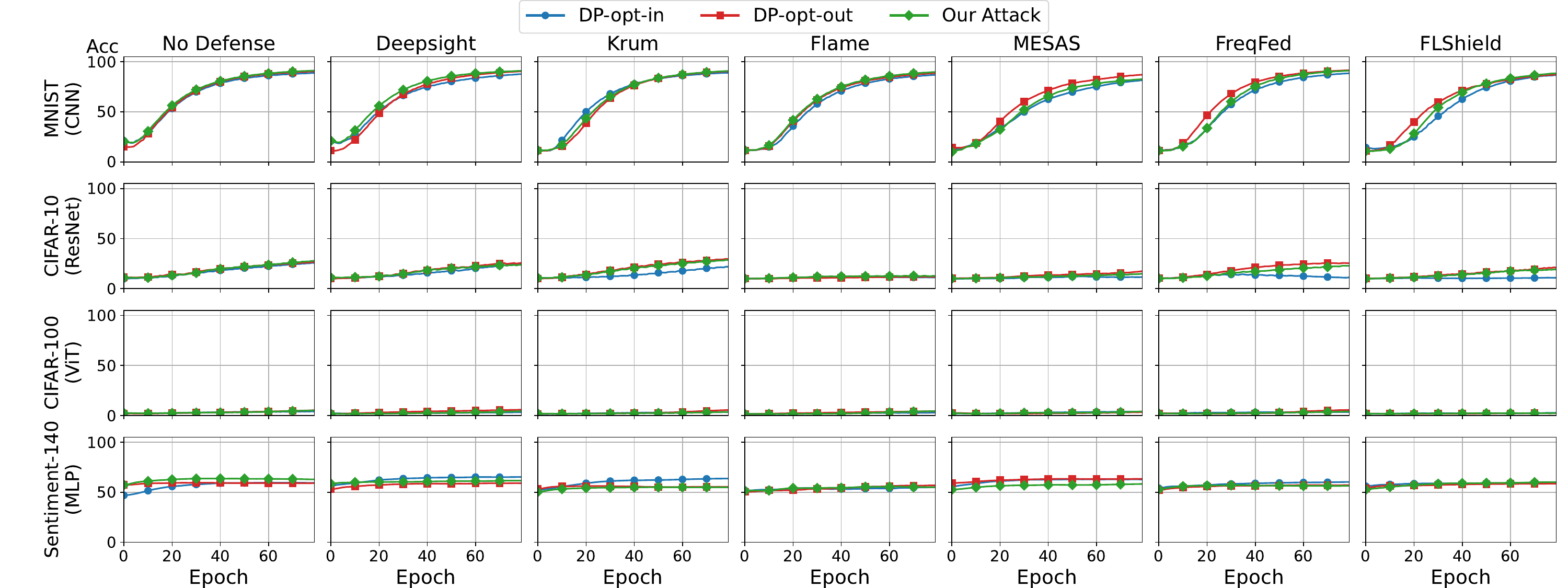}
}
 \caption{Acc of DP-opt-in, DP-opt-out and \ring attacks in the prob non-iid setting.}
 \label{fig:Acc_e=5_m=6}
\end{figure*}

\begin{figure*}[htbp]
\centering
\scalebox{0.9}[0.9]{
\includegraphics[scale=0.35]{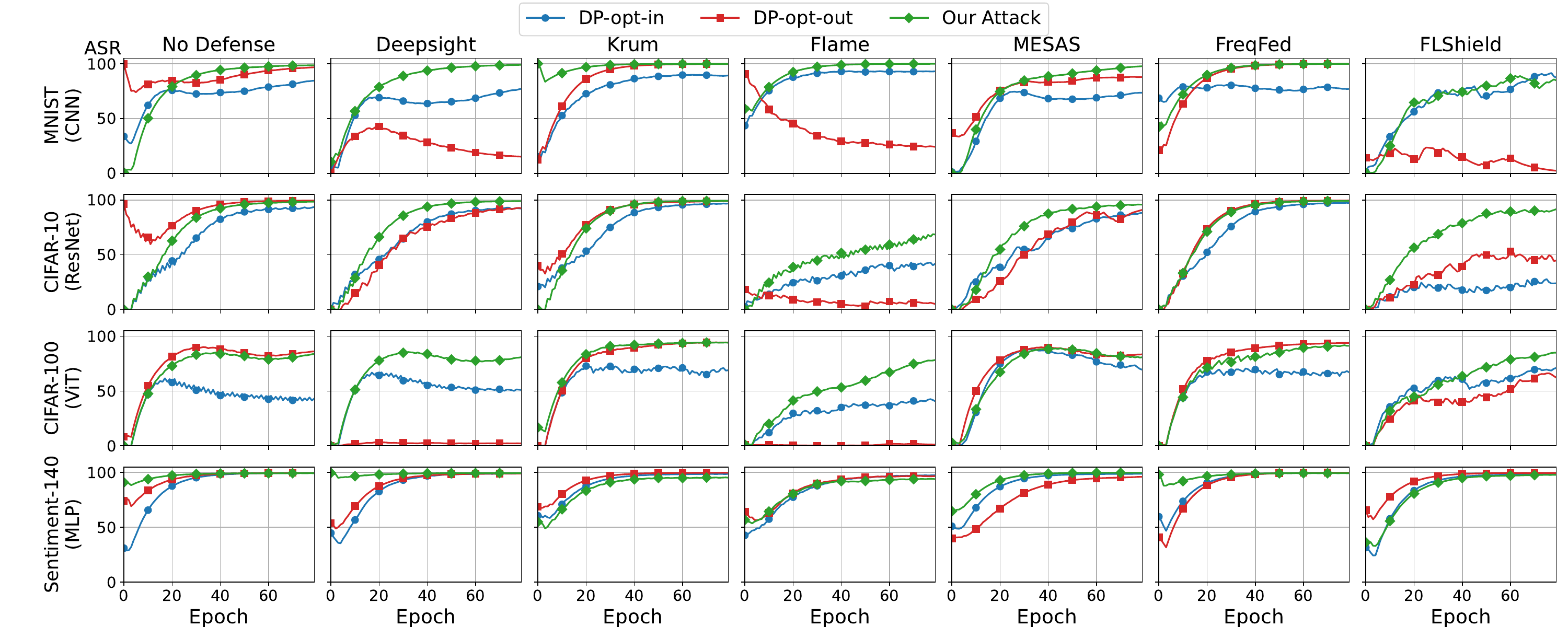}
}
 \caption{ASR of DP-opt-in, DP-opt-out, and \ring attacks in the qty non-iid setting.}
 \label{apx_fig:ASR_e=5_m=6_qty}
\end{figure*}

\begin{figure*}[htbp]
\centering
 \vspace{-5pt}
 \scalebox{0.9}[0.9]{
\includegraphics[scale=0.304]{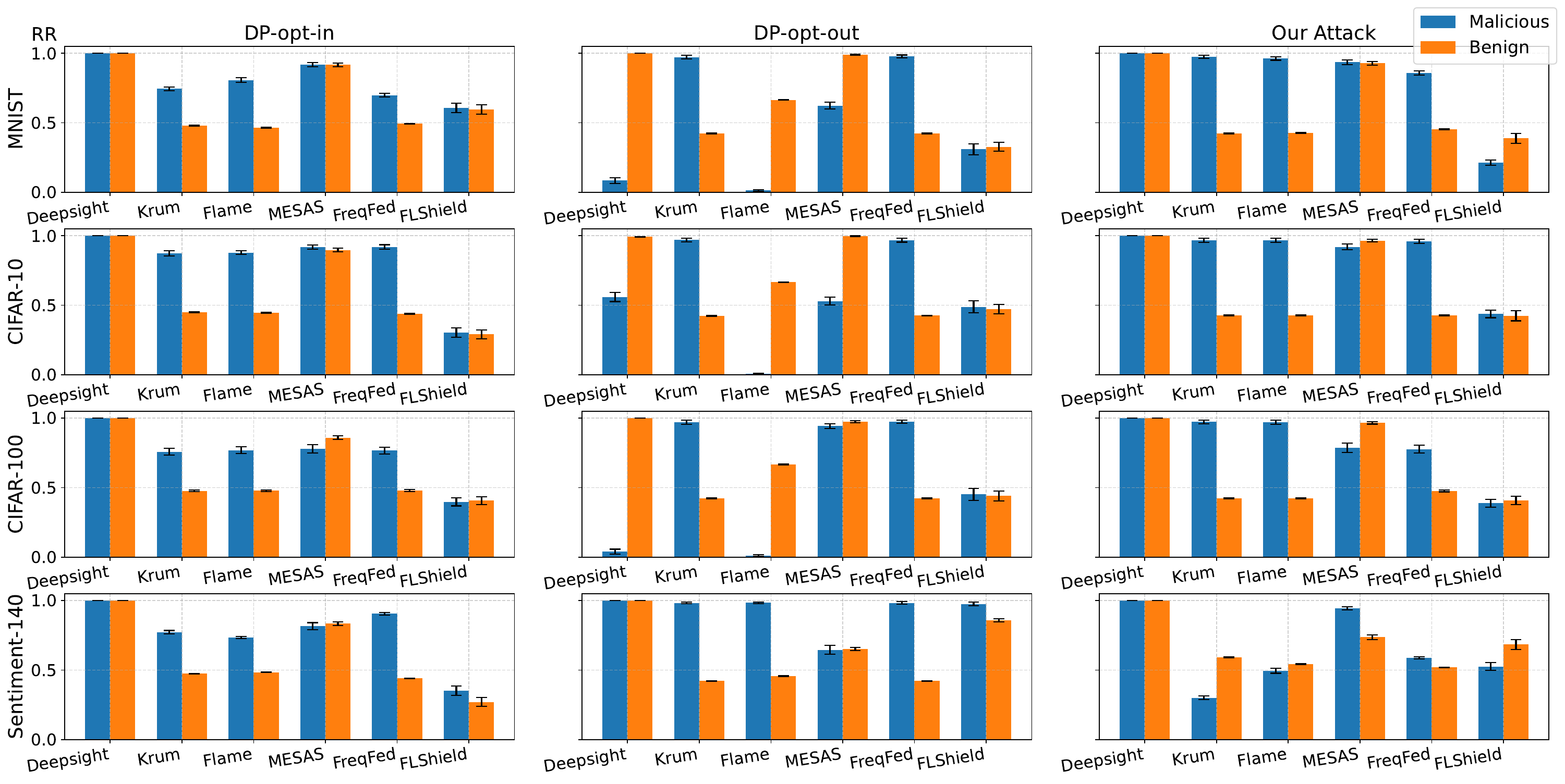}
 }\caption{Retention rate of DP-opt-in, DP-opt-out and \ring attacks in the qty non-iid setting.}
 \label{apx_fig:RR_e=5_m=6_qty}
 \vspace{-10pt}
\end{figure*}

\begin{table*}[htbp]
\centering
\scriptsize
\setlength{\tabcolsep}{4.15pt}
\caption{ASR, Acc (\%) and RR (malicious/benign,\%) of \ring, DP-opt-in, and DP-opt-out attacks under visible-trigger backdoor attack (VBA), Distributed backdoor attack (DBA) and Neurotoxin Backdoor Attack (NBA) on MNIST $(\epsilon=1)$.}
\label{tab:ASR_vary_backdoor}
\resizebox{\textwidth}{!}{%
\begin{tabular}{c|cccccccccccccc}
\toprule
\multirow{2}{*}{Backdoor}  & \multirow{2}{*}{} & {No Defense}& \multicolumn{2}{c}{Deepsight} & \multicolumn{2}{c}{Krum} & \multicolumn{2}{c}{Flame} & \multicolumn{2}{c}{MESAS} & \multicolumn{2}{c}{FreqFed} & \multicolumn{2}{c}{FLShield} \\
& & ASR / Acc& ASR / Acc&RR &ASR / Acc&RR &ASR / Acc&RR &ASR / Acc&RR&ASR / Acc&RR &ASR / Acc&RR\\
\midrule
\multirow{3}{*}{VBA} & Opt-in & 77.27 / 79.32 & 83.51 / 78.92 & 100 / 100  & 41.85 / 64.75 & 41 / 56 & 65.16 / 55.39 & 41 / 56 & 71.23 / 68.53 & 90 / 79 & 60.68 / 64.30 & 42 / 56  & 90.61 / 11.38 & 38 / 37\\
& Opt-out & 99.66 / 83.40 & 13.29 / 82.41 & 4 / 100 & 99.98 / 77.35 & 98 / 42 & 12.81 / 64.46 & 2 / 66 & 95.95 / 71.76 & 73 / 97 & 99.98 / 79.22 & 97 / 42  & 48.14 / 10.68 & 37 / 37 \\
& Ours & \cellcolor{gray!15}99.45 / \textbf{84.13} & \cellcolor{gray!15}\textbf{99.29} / \textbf{86.13} & 100 / 100 & 99.91 / 75.62 & 98 / 42 & \cellcolor{gray!15}\textbf{99.91} / \textbf{72.44} & 94 / 43 & \cellcolor{gray!15}\textbf{99.45} / \textbf{79.42} & 94 / 93 & 99.85 / 77.31 & 89 / 45 & \cellcolor{gray!15}\textbf{98.92} / \textbf{34.64} & 40 / 38 \\\cline{1-2}\cline{3-15} 
\multirow{3}{*}{DBA} & Opt-in & 21.72 / 80.55 & 18.83 / 77.06 & 100 / 100 & 22.06 / 58.23 & 42 / 56 & 25.96 / 57.26 & 41 / 57 & 31.94 / 58.45 & 88 / 79 & 23.72 / 64.63 & 41 / 57 & 77.29 / 11.44 & 39 / 40 \\
& Opt-out & 24.75 / 77.91 & 11.75 / 78.95 & 4 / 100 & 70.65 / 63.21 & 98 / 42 & 12.36 / 55.65 & 2 / 66 & 25.17 / 80.72 & 90 / 99 & 71.78 / 61.11 & 98 / 42 & 75.74 / 11.36 & 41 / 41 \\
& Ours & \cellcolor{gray!15}\textbf{33.12} / 80.43 & \cellcolor{gray!15}\textbf{20.77} / 81.32 & 100 / 100 & \cellcolor{gray!15}\textbf{88.66} / 70.58 & 98 / 42 & \cellcolor{gray!15}\textbf{78.17} / 51.47 & 94 / 43 & \cellcolor{gray!15}\textbf{38.00} / 81.75 & 92 / 92 & \cellcolor{gray!15}\textbf{72.48} / 69.82 & 88 / 45 & \cellcolor{gray!15}\textbf{93.22} / 18.01 & 43 / 36 \\\cline{1-2}\cline{3-15}
\multirow{3}{*}{NBA} & Opt-in & 28.03 / 51.15 & 4.35 / 77.70 & 100 / 100 & 1.88 / 67.41 & 42 / 56 & 4.09 / 56.42 & 41 / 57 & 39.62 / 40.15 & 88 / 79 & 1.86 / 68.92 & 41 / 57 & 88.81 / 11.88 & 39 / 40 \\
& Opt-out & 98.67 / 81.34 & 0.99 / 79.83 & 4 / 100 & 99.98 / 66.12 & 98 / 42 & 5.92 / 59.62 & 2 / 66 & 95.24 / 69.93 & 90 / 99 & 99.94 / 68.87 & 98 / 42 & 59.12 / 10.65 & 41 / 41 \\
& Ours & 98.42 / 81.13 & \cellcolor{gray!15}\textbf{98.80} / \textbf{80.35} & 100 / 100 & \cellcolor{gray!15}\textbf{99.98} / 65.93 & 98 / 42 & \cellcolor{gray!15}\textbf{99.64} / 59.52 & 94 / 43 & 90.59 / \cellcolor{gray!15}\textbf{73.27} & 92 / 92 & \cellcolor{gray!15}\textbf{99.89} / 67.95 & 88 / 45 & \cellcolor{gray!15}\textbf{91.94} / \textbf{26.47} & 43 / 36 \\
\bottomrule
\end{tabular}
}
\end{table*}
\vspace{-5pt}
\section{Proof of Theorem \ref{thm:effectiveness}}
\label{app:Proof_2}
\vspace{-5pt}
\begin{proof}
     Let $\mathcal{G}_l$ denote the $l$-th group. For each client $i \in \mathcal{G}_l$, the associated noise signal can be denoted as $ \zeta_i = (z_i - \frac{1}{m_l}\sum_{k\in{\mathcal{G}_l}}z_k)$
    where $z_k\sim \mathcal{N}(0, \sigma^2I_d).$ Suppose $k$ out of $m_l$ clients in $\mathcal{G}_l$ are kept. The probability for this event is $P_k = \binom{m_l}{k} f^k (1-f)^{m_l-k}$, for $1 \leq k < m_l$. The group’s aggregated error is
    \begin{align}\nonumber
        &\textrm{Err}_{\mathcal{G}_l,\mathcal{S}_t}=\sum_{i\in\mathcal{S}_t}\zeta_i\\ \nonumber
        &=\sum_{i\in\mathcal{S}_t}(z_i-\frac{1}{m_l}\sum_{j=1}^{m_l}z_j)\\\nonumber
        &=(\sum_{i\in\mathcal{S}_t}z_i-\frac{k}{m_l}\sum_{j=1}^{m_l}z_j) \\ \nonumber
        &=\sum_{i\in\mathcal{S}_t}z_i-\frac{k}{m_l}(\sum_{i\in\mathcal{S}_t}z_i+\sum_{j\notin\mathcal{S}_t}z_j) \\ \nonumber
        &=(1-\frac{k}{m_l})\sum_{i\in\mathcal{S}_t}z_i - \frac{k}{m_l}\sum_{j\notin\mathcal{S}_t}z_j.
    \end{align}
Since all $z_j$ are independent with zero mean, and each group is also independent, we have $\mathbb{E}||z_i||^2=d\sigma^2$, where $d$ is the dimensionality of the noise vector. For a given group, if $k$ clients remain after detection, the squared norm of the aggregated error from that group is
    \begin{align}\nonumber
        \mathbb{E}||\textrm{Err}_{\mathcal{G}_l,\mathcal{S}_t}||^2 &= d\sigma^2(k(1-\frac{k}{m_l})^2 \\ \nonumber
        &+(m_l - k)(\frac{k}{m_l})^2) \\ \nonumber
        &=d \sigma^2\cdot \frac{k(m_l-k)}{m_l}. \nonumber
    \end{align}
    Thus, we have the normalized sum of expected total error over all groups and all $k$ by the number of all the remaining malicious updates as follows
    \begin{align}\nonumber
        \mathbb{E}_{\ring}\bigl[\|\mathrm{Err}(f)\|^2\bigr] &= \frac{g}{S_t^2} \sum_{k=1}^{m_l-1} P_k \cdot 
        \mathbb{E}\left\|\textrm{Err}_{\mathcal{G}_l,\mathcal{S}_t}\right\|^2 \\ \nonumber
        &\approx \frac{g}{(mf)^2} \sum_{k=1}^{m_l-1} P_k \cdot \mathbb{E}\left\|\textrm{Err}_{\mathcal{G}_l,\mathcal{S}_t}\right\|^2 \\ \nonumber
        &= \frac{1}{(mf)^2}\cdot g \sum_{k=1}^{m_l-1} \binom{m_l}{k}f^k(1-f)^{m_l-k} \\ \nonumber
        &\cdot d\sigma^2\frac{k(m_l-k)}{m_l} \\ \nonumber
        &=\frac{m}{m_l(mf)^2}\cdot\mathbb{E}[x(m_l-x)]\cdot d\sigma^2 / m_l, \\ \nonumber
    \end{align}
    where $x\sim \textrm{Binomial}(m_l,f)$. Notice that $\mathbb{E}[x(m_l-x)] = m_l\mathbb{E}[x]-\mathbb{E}[x^2]$ and $k \sim \textrm{Binomial}(m_l, f)$. We have $\mathbb{E}[x]=m_lf$ and $\mathbb{E}[x^2] = Var(x)+(\mathbb{E}[x])^2=m_lf(1-f)+(m_lf)^2$. Therefore, $\mathbb{E}[x(m_l-x)] = m_lf(1-f)(m_l-1)$.
    Thus, we have
    \begin{align}\nonumber
        &\mathbb{E}_{\ring}\bigl[\|\mathrm{Err}(f)\|^2\bigr] \approx\frac{m}{m_l(mf)^2}\cdot\mathbb{E}[x(m_l-x)]\cdot d\sigma^2 / m_l \\ \nonumber
        &=\frac{m}{m_l(mf)^2} \cdot m_lf(1-f)(m_l-1) \cdot d\sigma^2 / m_l \\ \nonumber
        &=\frac{d\sigma^2}{m}\cdot\frac{m_l-1}{m_l}\cdot \frac{1-f}{f}.
    \end{align}
\end{proof}

\begin{figure*}[htbp]
\centering
\vspace{-5pt}
\scalebox{0.9}[0.9]{
\includegraphics[scale=0.35]{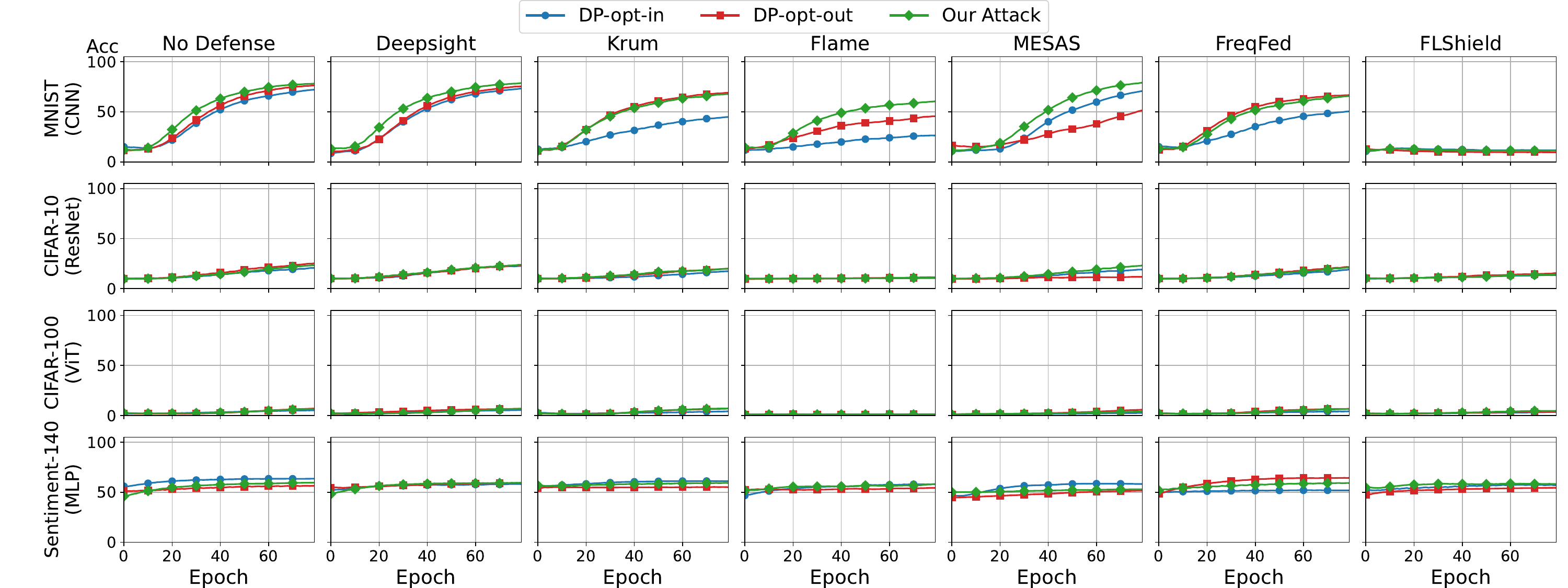}
 }\caption{Acc of DP-opt-in, DP-opt-out and \ring attacks in the qty non-iid setting.}
 \label{apx_fig:Acc_e=5_m=6_qty}
 \vspace{-5pt}
\end{figure*}
\vspace{-15pt}
\section{Proof of Theorem \ref{thm:err_compare}}
\label{app:Proof_3}

\begin{proof}
    For the DP-opt-in attack, each retained malicious client contributes an i.i.d. noise vector $z_i \sim \mathcal{N}(0,\sigma^2 I_d)$. Let $\mathcal{S}_t$ be the set of retained malicious clients. The aggregated error over $\mathcal{S}_t$ is
    \begin{equation}
      \mathrm{Err}_{\textit{DP-opt-in}}(f)=
      \begin{cases}
          \frac{1}{|\mathcal{S}_t|}\sum_{i\in \mathcal{S}_t} z_i, & |\mathcal{S}_t|>0,\\[4pt]
          0, & |\mathcal{S}_t|=0.\nonumber
      \end{cases}
    \end{equation}
    Therefore, when $|\mathcal{S}_t| > 0$, we have
    \begin{align}
        \frac{1}{|\mathcal{S}_t|}\sum_{i\in \mathcal{S}_t} z_i \sim \mathcal{N}\left(0,\frac{\sigma^2}{|\mathcal{S}_t|} I_d\right).\nonumber
    \end{align}
    Hence
    \begin{align}
        \mathbb{E}_{\textit{DP-opt-in}}\bigl[\|\mathrm{Err}(f)\|^2\bigr]
        = \frac{d\sigma^2}{|\mathcal{S}_t|} \approx \frac{d\sigma^2}{mf} \nonumber 
    \end{align}
    According to Theorem~\ref{thm:effectiveness}, we have
    \begin{align}
        \mathbb{E}_{\ring}\bigl[\|\mathrm{Err}(f)\|^2\bigr]
        \approx \frac{d\sigma^2}{m}\cdot \frac{m_l-1}{m_l}\cdot \frac{1-f}{f}.\nonumber
    \end{align}
    As a result, 
    \begin{align}
        \frac{\mathbb{E}_{\ring}\bigl[\|\mathrm{Err}(f)\|^2\bigr]}{\mathbb{E}_{\textit{DP-opt-in}}\bigl[\|\mathrm{Err}(f)\|^2\bigr]}
        &\approx
        \frac{\frac{d\sigma^2}{m}\cdot \frac{m_l-1}{m_l}\cdot \frac{1-f}{f}}
             {\frac{d\sigma^2}{mf}}
        \nonumber\\
        &=
        \frac{m_l-1}{m_l}\cdot (1-f).\nonumber
    \end{align}
\end{proof}

\begin{table}[bthp]
\vspace{-5pt}
\setlength\tabcolsep{1pt}
\centering
\caption{Datasets and Models.}
\label{tab:models}
\begin{tabular}{c|ccccccccccccccc}
\hline
\multicolumn{1}{l|}{Dataset} && \multicolumn{2}{c}{\#training} &&& \multicolumn{2}{c}{\#testing} &&& \multicolumn{2}{c}{\#input} &&& \multicolumn{2}{c}{\multirow{2}{*}{Model $\Theta$}} \\ 
 \multicolumn{1}{l|}{(\#categories)} && \multicolumn{2}{c}{samples} &&& \multicolumn{2}{c}{samples} &&& \multicolumn{2}{c}{size} &&& \\ \hline
\multicolumn{1}{l|}{MNIST (10)} && \multicolumn{2}{c}{60,000} &&& \multicolumn{2}{c}{10,000} &&& \multicolumn{2}{c}{(28,28,1)} &&&\multicolumn{2}{c}{3-Block CNN}\\
\multicolumn{1}{l|}{CIFAR-10 (10)}  && \multicolumn{2}{c}{50,000} &&& \multicolumn{2}{c}{10,000} &&& \multicolumn{2}{c}{(32,32,3)} &&&\multicolumn{2}{c}{ResNet-18}\\
\multicolumn{1}{l|}{CIFAR-100 (100)}  && \multicolumn{2}{c}{50,000} &&& \multicolumn{2}{c}{10,000} &&& \multicolumn{2}{c}{(32,32,3)} &&&\multicolumn{2}{c}{ViT-Tiny}\\
\multicolumn{1}{l|}{Sentiment-140 (2)} && \multicolumn{2}{c}{64,000} &&& \multicolumn{2}{c}{16,000} &&& \multicolumn{2}{c}{(100,)} &&&\multicolumn{2}{c}{Emb-128 + FC-1}\\ \hline
\end{tabular}
\vspace{-10pt}
\end{table}

\section{Datasets and Models}\label{app_sec:datasets_models}
Table~\ref{tab:models} summarizes the datasets, training configurations, and model architectures. For MNIST, we adopt a 3-block CNN; for CIFAR-10, ResNet-18; and for CIFAR-100, a ViT-Tiny~\cite{touvron2021training} adapted for CIFAR. The ViT-Tiny uses $4\times4$ patches, an embedding dimension of $192$, $8$ Transformer blocks, $3$ attention heads, and an MLP dimension of $768$. For Sentiment-140, we use a $128$-dimensional embedding layer followed by a fully connected output layer for binary sentiment classification.

\section{Additional Results}\label{app_sec:results_backdoor}
Figure~\ref{fig:Acc_e=5_m=6} presents the main task accuracy under all attacks. The accuracy across different backdoor variants remains similar, indicating that our attack does not degrade model performance and behaves consistently with baseline backdoor attacks.

Figures~\ref{apx_fig:ASR_e=5_m=6_qty}–\ref{apx_fig:RR_e=5_m=6_dir} report the ASR, Acc, and retention rates for different backdoor attacks under multiple defenses across all datasets in the qty non-iid and dir non-iid settings, respectively. The results are consistent with the conclusions in Section~\ref{sec:results_ring}.

\begin{figure*}[htbp]
\centering
\vspace{-5pt}
\scalebox{0.9}[0.9]{
\includegraphics[scale=0.35]{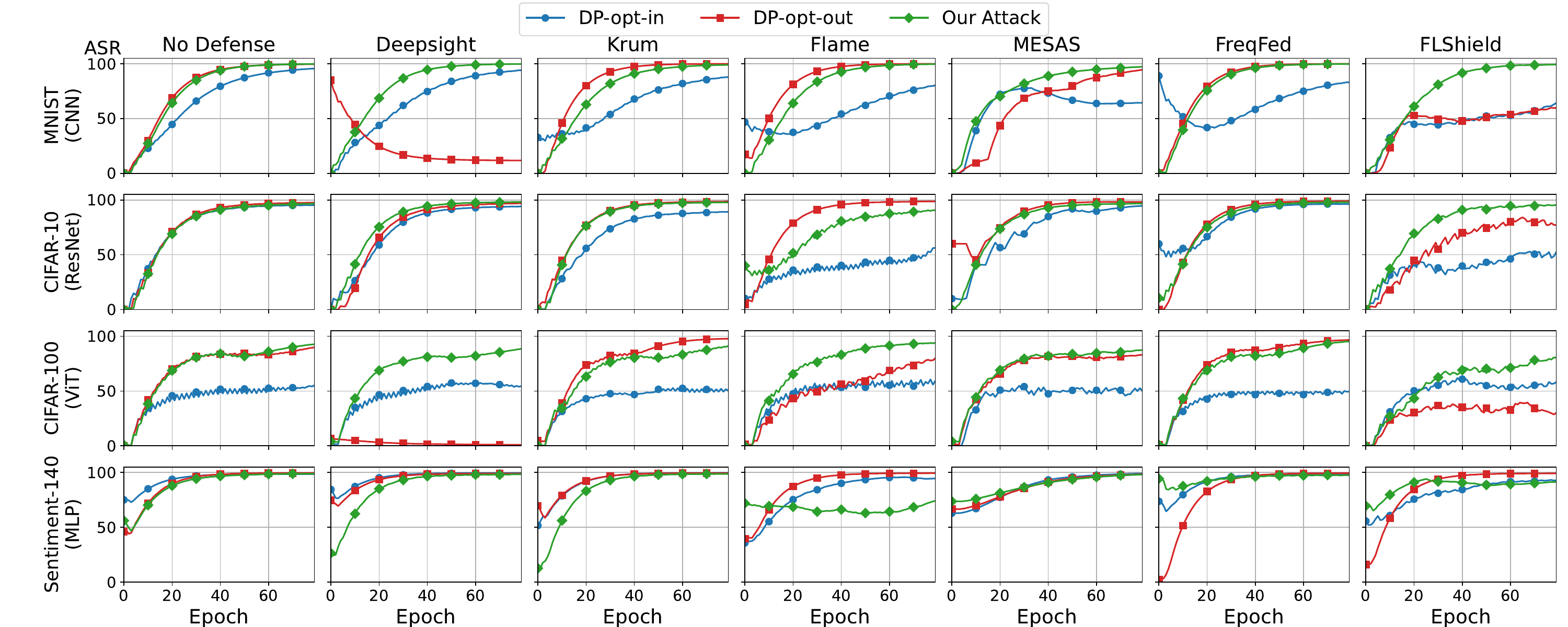}
 }\caption{ASR of DP-opt-in, DP-opt-out, and \ring attacks in the dir non-iid setting.}
 \label{apx_fig:ASR_e=5_m=6_dir}
\vspace{-20pt}
\end{figure*}

\begin{figure*}[htbp]
\centering
\vspace{-5pt}
\scalebox{0.9}[0.9]{
\includegraphics[scale=0.35]{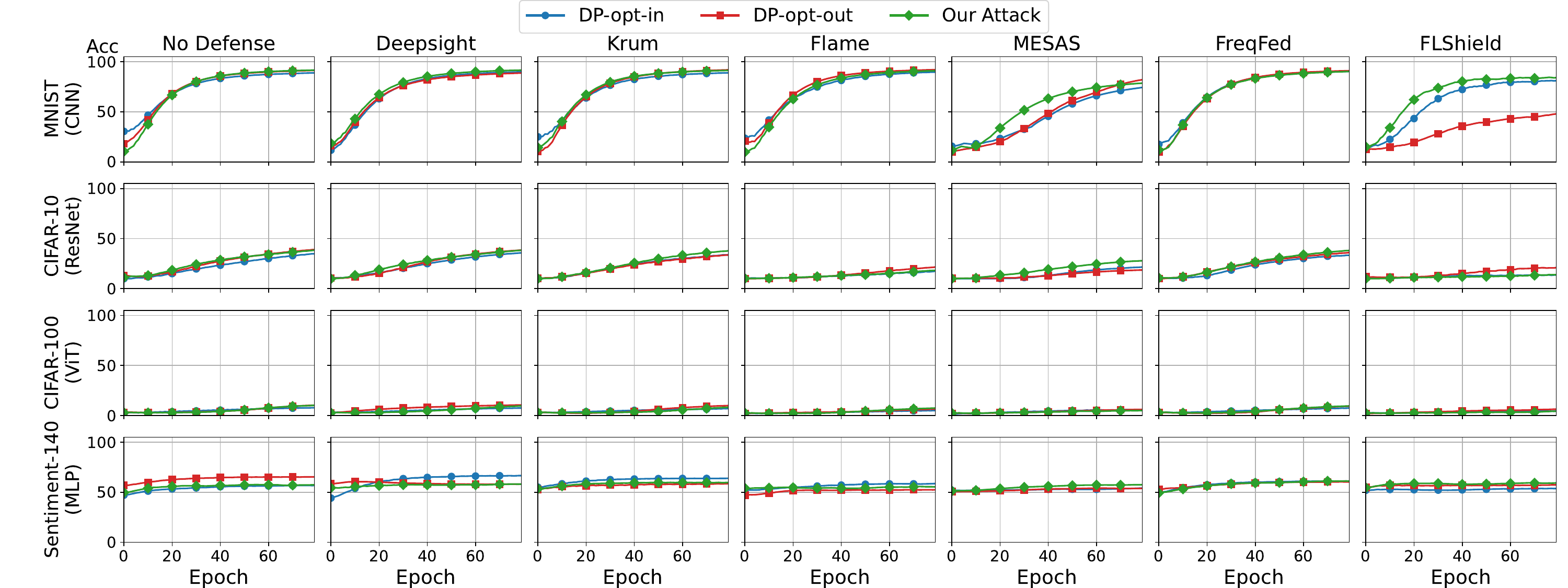}
 }\caption{Acc of DP-opt-in, DP-opt-out and \ring attacks in the dir non-iid setting.}
 \label{apx_fig:Acc_e=5_m=6_dir}
 \vspace{-20pt}
\end{figure*}

\begin{figure*}[htbp]
\centering
\vspace{-5pt}
\scalebox{0.9}[0.9]{
\includegraphics[scale=0.304]{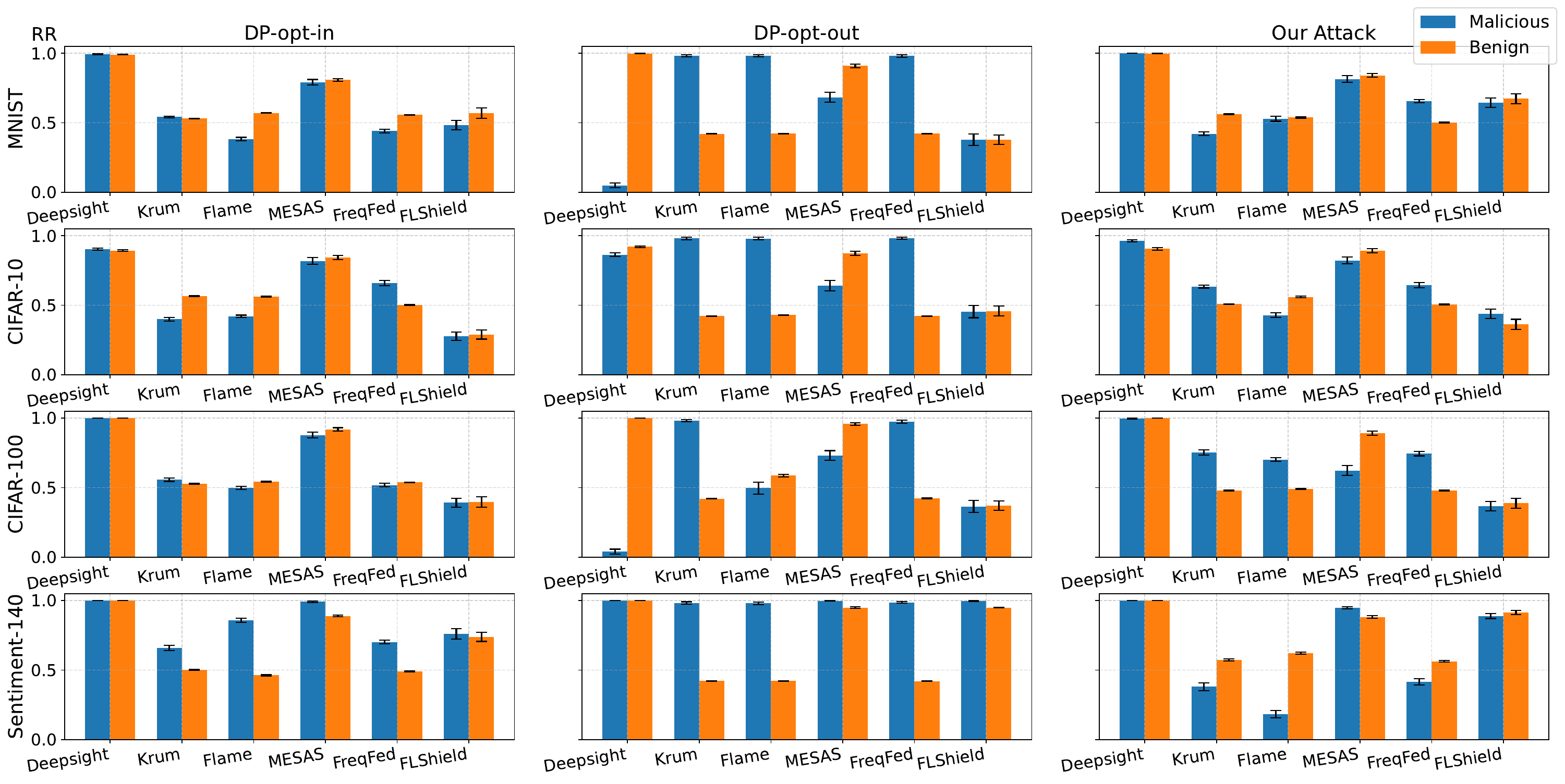}
 }\caption{Retention rate of DP-opt-in, DP-opt-out and \ring attacks in the dir non-iid setting.}
 \label{apx_fig:RR_e=5_m=6_dir}
 \vspace{-20pt}
\end{figure*}

\end{appendices}

\end{document}